\newtheorem{theorem}{Theorem}
\newtheorem{definition}{Definition}
\newtheorem{lemma}{Lemma}
\def\dd{\mathrm{d}}
\def\vw{\mathbf{w}}
\def\va{\mathbf{a}}
\def\vx{\mathbf{x}}
\def\vv{\mathbf{v}}
\def\vg{\mathbf{g}}
\def\vc{\mathbf{c}}
\def\vf{\mathbf{f}}
\def\vp{\mathbf{p}}
\def\rr{\mathbb{R}}
\def\ii#1{\mathbb{I}\left[#1\right]}
\def\pr{\mathbb{P}}
\def\tpr{\tilde{\mathbb{P}}}
\def\todo#1{\textbf{TODO:} [#1]} 
\newcommand{\yuandong}[1]{\textcolor{red}{[Yuandong: #1]}}
\def\ee#1{\mathbb{E}\left[#1\right]}
\def\ee2#1#2{\mathbb{E}_{#1}\left[#2\right]}
\def\dd{\mathrm{d}}
\def\vzero{\mathbf{0}}
\def\vone{\mathbf{1}}
\def\spx#1{\boldsymbol{\Delta}^{#1}}
\def\reg#1{x_{#1}}
\def\regcond#1{\reg{-#1} | \reg{#1}} 
\def\fullreg{x}
\title{A theoretical framework for deep locally connected ReLU network}
\author{Yuandong Tian \\
Facebook AI Research \\
\texttt{yuandong@fb.com}}
\begin{document}
\maketitle
\vspace{-0.1in}

\begin{abstract}
Understanding theoretical properties of deep and locally connected nonlinear network, such as deep convolutional neural network (DCNN), is still a hard problem despite its empirical success. In this paper, we propose a novel theoretical framework for such networks with ReLU nonlinearity. The framework explicitly formulates data distribution, favors disentangled representations and is compatible with common regularization techniques such as Batch Norm. The framework is built upon teacher-student setting, by expanding the student forward/backward propagation onto the teacher's computational graph. The resulting model does not impose unrealistic assumptions (e.g., Gaussian inputs, independence of activation, etc). Our framework could help facilitate theoretical analysis of many practical issues, e.g. overfitting, generalization, disentangled representations in deep networks. 
\end{abstract}

\vspace{-0.1in}
\section{Introduction}
Deep Convolutional Neural Network (DCNN) has achieved a huge empirical success in multiple disciplines (e.g., computer vision~\citep{alexnet,vgg,resnet}, Computer Go~\citep{alphago,alphagozero,darkforest}, and so on). On the other hand, its theoretical properties remain an open problem and an active research topic. 

Learning deep models are often treated as non-convex optimization in a high-dimensional space. From this perspective, many properties in deep models have been analyzed: landscapes of loss functions~\citep{landscape-anna, skip-connection-landscape-better, mei2016landscape}, saddle points~\citep{exponential-time-saddle-point, yannd-saddle-point}, relationships between local minima and global minimum~\citep{kenji-local-min-global-min, hardt2016identity, DBLP:journals/corr/abs-1712-08968}, trajectories of gradient descent~\citep{goodfellow2014qualitatively}, path between local minima~\citep{venturi2018neural}, etc.

However, two components are missing: such a modeling does not consider specific network structure and input data distribution, both of which are critical factors in practice. Empirically, deep models work particular well for certain forms of data (e.g., images); theoretically, for certain data distribution, popular methods like gradient descent is shown to be unable to recover the network parameters~\citep{brutzkus2017globally}.

Along this direction, previous theoretical works assume specific data distributions like spherical Gaussian and focus on shallow nonlinear networks~\citep{tian2017analytical,brutzkus2017globally,du-spurious-local-min-icml18}. These assumptions yield nice forms of gradient to enable analysis of many properties such as global convergence, which makes it nontrivial to extend to deep nonlinear neural networks that yield strong empirical performance. 

In this paper, we propose a novel theoretical framework for deep locally connected ReLU network that is applicable to general data distributions. Specifically, we embrace a teacher-student setting: the \emph{teacher} generates classification label via a hidden computational graph, and the \emph{student} updates the weights to fit the labels with gradient descent. Then starting from gradient descent rule, we marginalize out the input data conditioned on the graph variables of the teacher at each layer, and arrive at a \emph{reformulation} that \textbf{(1)} captures data distribution as explicit terms and leads to more interpretable model, \textbf{(2)} compatible with existing state-of-the-art regularization techniques such as Batch Normalization~\citep{batchnorm}, and \textbf{(3)} favors disentangled representation when data distributions have factorizable structures. To our best knowledge, our work is the first theoretical framework to achieve these properties for deep and locally connected nonlinear networks.

Previous works have also proposed framework to explain deep networks, e.g., renormalization group for restricted Boltzmann machines~\citep{mehta2014exact}, spin-glass models~\citep{amit1985spin,choromanska2015loss}, transient chaos models~\citep{poole2016exponential}, differential equations~\citep{su2014differential, saxe2013exact}. In comparison, our framework \textbf{(1)} imposes mild assumptions rather than unrealistic ones (e.g., independence of activations), \textbf{(2)} explicitly deals with back-propagation which is the dominant approach used for training in practice, \textbf{(3)} considers spatial locality of neurons, an important component in practical deep models, and \textbf{(4)} models data distribution explicitly. 

The paper is organized as follows: Sec.~\ref{sec:basic-formulation} introduces a novel approach to model locally connected networks. Sec.~\ref{sec:teacher-student-setting} introduces the teacher-student setting and label generation, followed by the proposed reformulation. Sec.~\ref{sec:batch-norm-under-coarse-model} gives one novel finding that Batch Norm is a projection onto the orthogonal complementary space of neuron activations, and the reformulation is compatible with it. Sec.~\ref{sec:property-of-coarse-model} shows a few applications of the framework, e.g., why nonlinearity is helpful, how factorization of the data distribution leads to disentangled representation and other issues. 

\def\ch{\mathrm{ch}}
\def\pa{\mathrm{pa}}
\def\Z{\mathcal{Z}}
\def\rf#1{{\mathrm{rf}(#1)}}
\def\raw{\mathrm{raw}}
\def\sign{\mathrm{sign}}

\section{Basic formulation}
\label{sec:basic-formulation}
\subsection{General setting}
In this paper, we consider multi-layer (deep) network with ReLU nonlinearity. We consider supervised setting, in which we have a dataset $\{(x, y)\}$, where $x$ is the input image and $y$ is its label computed from $x$ in a deterministic manner. Sec.~\ref{sec:teacher-student-setting} describes how $x$ maps to $y$ in details.  

We consider a neuron (or node) $j$. Denote $f_j$ as its activation after nonlinearity and $g_j$ as the (input) gradient it receives after filtered by ReLU's gating. Note that both $f_j$ and $g_j$ are deterministic functions of the input $x$ and label $y$. Since $y$ is a deterministic function of $x$, we can write $f_j = f_j(x)$ and $g_j=g_j(x)$. Note that all analysis still holds with bias terms. We omit them for brevity.

The activation $f_j$ and gradient $g_k$ can be written as (note that $f'_j$ is the binary gating function):
\begin{equation}
    f_j(x) = f'_j(x) \sum_{k\in \ch(j)} w_{jk} f_k(x), \quad g_k(x) = f'_k(x) \sum_{j\in \pa(k)} w_{jk} g_j(x)
    \label{eq:x-update}
\end{equation}
And the weight update for gradient descent is:
\begin{equation}
    \Delta w_{jk} = \ee2{x}{f_k(x)g_j(x)}\label{eq:x-weight-update}
\end{equation}
Here is the expectation is with respect to a training dataset (or a batch), depending on whether GD or SGD has been used. We also use $f_j^\raw$ and $g_j^\raw$ as the counterpart of $f_j$ and $g_j$ before nonlinearity.

\begin{figure}
    \centering
    \includegraphics[width=\textwidth]{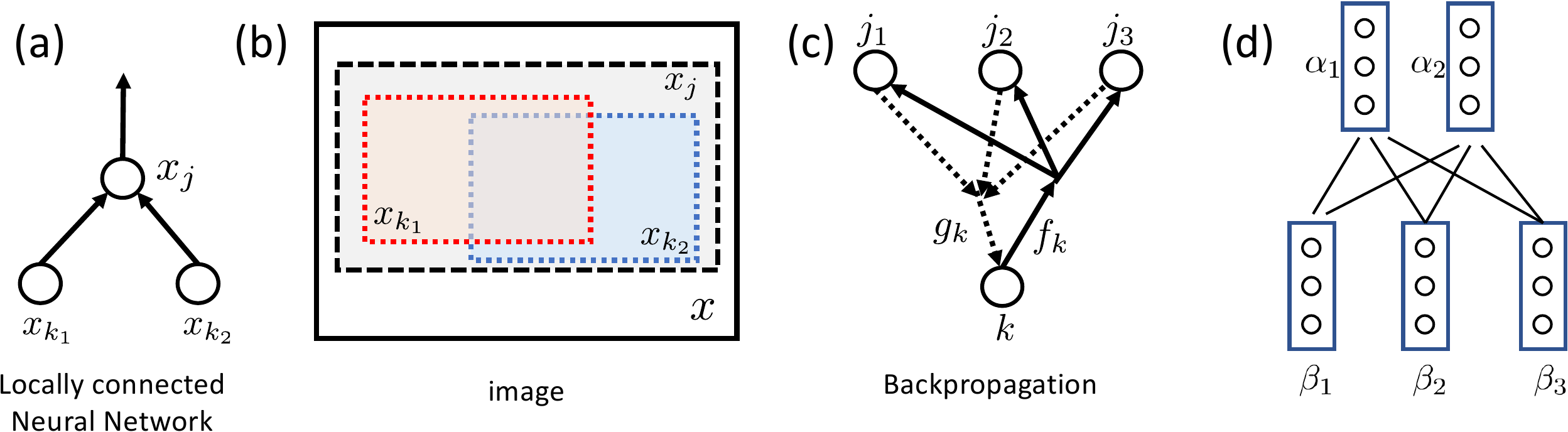}
    \caption{Problem Setting. (a) Locally connected network,  (b) the receptive fields of each node. (c) notations used in backpropagation. (d) nodes with same receptive fields are grouped (Eqn.~\ref{eqn:matrix-form-simplified}).}
    \label{fig:lcn}
\end{figure}

\subsection{Locally Connected Network}
Locally connected networks have extra structures, which leads to our reformulation. As shown in Fig.~\ref{fig:lcn}, each node $j$ only covers one part of the input images (i.e., \emph{receptive field}). We use Greek letters $\{\alpha, \beta, \ldots, \omega\}$ to represent receptive fields. For a region $\alpha$, $x_\alpha$ is the content in that region. $j\in\alpha$ means node $j$ covers the region of $\alpha$ and  $n_\alpha$ is the number of nodes that cover the same region (e.g., multi-channel case). The image content is $\reg{\alpha(j)}$, abbreviated as $\reg{j}$ if no ambiguity. The parent $j$'s receptive field covers its children's. Finally, $\omega$ represents the entire image.

By definition, the activation $f_j$ of node $j$ is only dependent on the region $\reg{j}$, rather than the entire image $x$. This means that $f_j(\fullreg) = f_j(\reg{j})$ and $f_j(\reg{j}) = f'_j(x_j)\sum_k w_{jk} f_k(\reg{k})$. However, the gradient $g_j$ is determined by the entire image $\fullreg$, and its label $y$, i.e., $g_j = g_j(\fullreg, y)$. Note that here we assume that the label $y$ is a deterministic (but unknown) function of $\fullreg$. Therefore, for gradient we just write $g_j = g_j(\fullreg)$.

\subsection{Marginalized Gradient} 
Given the structure of locally connected network, the gradient $g_j$ has some nice structures. From Eqn.~\ref{eq:x-weight-update} we knows that
$\Delta w_{jk} = \ee2{x}{f_k(\fullreg)g_j(\fullreg)} = \ee2{\reg{k}}{f_k(\reg{k}) \ee2{x_{-k} | x_k}{g_j(x)}}$. Define $\reg{-k} = \fullreg \backslash \reg{k}$ as the input image $x$ except for $\reg{k}$. Then we can define the \emph{marginalized gradient}:
\begin{equation}
g_j(x_k) = \ee2{\regcond{k}}{g_j(\fullreg)}
\end{equation}
as the marginalization (average) of $\reg{-k}$, while keep $\reg{k}$ fixed. With this notation, we can write $\Delta w_{jk} = \ee2{\reg{k}}{f_k(\reg{k})g_j(\reg{k})}$.

On the other hand, the gradient which  back-propagates to a node $k$ can be written as
\begin{equation}
    g_k(\fullreg) = f'_k(\fullreg) \sum_{j \in \pa(k)} w_{jk} g_j(\fullreg) = f'_k(\reg{k}) \sum_j w_{jk} g_j(\fullreg)
\end{equation}
where $f'_k$ is the derivative of activation function of node $k$ (for ReLU it is just a gating function). If we take expectation with respect to $\regcond{k}$ on both side, we get 
\begin{equation}
    g_k(\reg{k}) = f'_k(\reg{k})g^{\mathrm{raw}}_k(\reg{k}) = f'_k(\reg{k}) \sum_{j\in \pa(k)} w_{jk} g_j(\reg{k}) \label{eq:grad-collect}
\end{equation}
Note that all marginalized gradients $g_j(\reg{k})$ are independently computed by marginalizing with respect to all regions that are outside the receptive field $\reg{k}$. Interestingly, there is a relationship between these gradients that respects the locality structure:
\begin{theorem}[Recursive Property of marginalized gradient]
$g_j(\reg{k}) = \ee2{\reg{j, -k} | \reg{k}}{g_j(\reg{j})}$ \end{theorem}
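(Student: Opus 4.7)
The plan is to derive the identity by iterated conditioning on the nested receptive fields $x_k \subseteq x_j \subseteq x$, using only the definition of the marginalized gradient and the tower property of conditional expectation.

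First I would set up the partition of the input. Since $k$ is a child of $j$, the receptive field of $k$ is contained in that of $j$, so I can split the complement $x_{-k} = x \setminus x_k$ as a disjoint union $x_{-k} = x_{j,-k} \cup x_{-j}$, where $x_{j,-k} := x_j \setminus x_k$ lies inside the receptive field of $j$ but outside that of $k$, and $x_{-j} := x \setminus x_j$ lies entirely outside $x_j$. Starting from the definition
\begin{equation}
g_j(x_k) = \ee2{x_{-k}|x_k}{g_j(x)},
\end{equation}
I would apply the tower property to peel off the two pieces:
\begin{equation}
g_j(x_k) = \ee2{x_{j,-k}|x_k}{\ee2{x_{-j}|x_j,x_k}{g_j(x)}}.
\end{equation}

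Next I would simplify the inner conditional expectation. Since $x_k$ is a deterministic function of $x_j$ (it is just a sub-region of $x_j$), conditioning on the pair $(x_j,x_k)$ is the same as conditioning on $x_j$ alone. Hence the inner expectation becomes $\mathbb{E}_{x_{-j}|x_j}[g_j(x)]$, which by definition is precisely the marginalized gradient $g_j(x_j)$. Substituting this back yields
\begin{equation}
g_j(x_k) = \ee2{x_{j,-k}|x_k}{g_j(x_j)},
\end{equation}
which is the claimed recursive property.

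I do not expect any serious obstacle here; the statement is essentially the tower property applied to the nested sigma-algebras generated by $x_k \subseteq x_j \subseteq x$. The only point that warrants a sentence of care is justifying that conditioning on $(x_j,x_k)$ collapses to conditioning on $x_j$, which follows because $x_k$ is a measurable function of $x_j$. No assumption on the distribution of $x$ (independence, Gaussianity, etc.) is needed.
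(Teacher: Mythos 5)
Your proof is correct and follows essentially the same route as the paper's: both split $x_{-k}$ into $x_{j,-k}$ and $x_{-j}$, apply iterated conditioning, use that conditioning on $(x_j, x_k)$ collapses to conditioning on $x_j$, and identify the inner expectation $\ee2{x_{-j}|x_j}{g_j(x)}$ with $g_j(x_j)$. Your arrangement of the nesting is, if anything, slightly more direct than the paper's (which swaps the order of the two expectations midway), but the underlying argument is identical.
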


This shows that the marginal gradient has recursive structure: we can first compute $g_j(\reg{j})$ for top node $j$, then by marginalizing over the region within $\reg{j}$ but outside $\reg{k}$, we get its projection $g_j(\reg{k})$ on child $k$, then by Eqn.~\ref{eq:grad-collect} we collect all projections from all the parents of node $k$, to get $g_k(\reg{k})$. This procedure can be repeated until we arrive at the leaf nodes. 

\section{Teacher-Student Setting}
\label{sec:teacher-student-setting}
In order to analyze the behavior of neural network under backpropagation (BP), one needs to make assumptions about how the input $x$ is generated and how the label $y$ is related to the input $x$. Previous works assume Gaussian inputs and shallow networks, which yields analytic solution to gradient~\citep{tian2017analytical,du-spurious-local-min-icml18}, but might not align well with the practical data distribution.

\subsection{The Summarization Ladder}
We consider a multi-layered deterministic function as the teacher (not necessary a neural network). For each region $x_\alpha$, there is a latent discrete \emph{summarization} variable $z_\alpha$ that captures the information of the input $x_\alpha$. Furthermore, we assume $z_\alpha = z_\alpha(x_\alpha) = z_\alpha(\{z_\beta\}_{\beta\in\ch(\alpha)})$, i.e., the summarizaion only relies on the ones in the immediate lower layer. In particular, the top level summarization is the \emph{label} of the image, $y = z_\omega$, where $\omega$ represents the region of the entire image. We call a particular assignment of $z_\alpha$, $z_\alpha = a$, an \emph{event}. Finally, $m_\alpha$ is how many values $z_\alpha$ can take.

During training, all summarization functions $\Z = \{z_\alpha\}$ are unknown except for the label $y$. 

\subsection{Function Expansion on Summarization}
Let's consider the following quantity. For each neural node $j$, we want to compute the expected gradient given a particular \emph{factor} $z_\alpha$, where $\alpha=\rf{j}$ (the reception field of node $j$):
\begin{equation}
    g_j(z_\alpha) \equiv \ee2{X_j|z_\alpha}{g_j(X_j)} = \int g_j(x_j) \pr(x_j|z_\alpha) \dd x_j
\end{equation}
And $\tilde g_j(z_\alpha) =  g_j(z_\alpha)\pr(z_\alpha)$. Similarly, $f_j(z_\alpha) = \ee2{X_j|z_\alpha}{f_j(X_j)}$ and $f'_j(z_\alpha) = \ee2{X_j|z_\alpha}{f'_j(X_j)}$. 

Note that $\pr(x_j | z_\alpha)$ is the \emph{frequency count} of $x_j$ for $z_\alpha$. If $z_\alpha$ captures all information of $x_j$, then $\pr(x_j | z_\alpha)$ is a delta function. Throughout the paper, we use frequentist interpretation of probabilities. 

Intuitively, if we have $g_j(z_\alpha = a) > 0$ and $g_j(z_\alpha \neq a) < 0$, then the node $j$ learns about the \emph{hidden} event $z_\alpha = a$. For multi-class classification, the top level nodes (just below the softmax layer) already embrace such correlations (here $j$ is the class label):
\begin{equation}
    g_j(y = j) > 0,\quad g_j(y \neq j) < 0, \label{eqn:grad-top-level}
\end{equation}
where we know $z_\omega = y$ is the top level factor. A natural question now arises: 

\begin{center}
\emph{Does gradient descent automatically push $g_j(z_\alpha)$ to be correlated with the factor $z_\alpha$}?     
\end{center}

If this is true, then 
gradient descent on deep models is essentially a \emph{weak-supervised} approach that automatically learns the intermediate events at different levels. Giving a complete answer of this question is very difficult and is beyond the scope of this paper. Here, we aim to build a theoretical framework that enables such analysis. We start with the relationship between neighboring layers:

\begin{theorem}[Reformulation]
\label{thm:coarse-model}
Denote $\alpha = \rf{j}$ and $\beta=\rf{k}$. $k$ is a child of $j$. If the following conditions hold:
\begin{itemize}
    \item \textbf{Focus of knowledge}. $\pr(x_k|z_\alpha, z_\beta) = \pr(x_k|z_\beta)$. 
    \item \textbf{Broadness of knowledge}. $\pr(x_j|z_\alpha, z_\beta) = \pr(x_j|z_\alpha)$. 
    \item \textbf{Decorrelation}. Given $z_\beta$, ($g_k^{\mathrm{raw}}(\cdot)$ and $f'_k(\cdot)$) and ($f_k^{\mathrm{raw}}(\cdot)$ and $f'_k(\cdot)$) are \emph{uncorrelated}.
\end{itemize}
Then the following iterative equations hold:
\begin{eqnarray}
f_j(z_\alpha)\!= \! f_j'(z_\alpha)\!\sum_{k\in\ch(j)}w_{jk}\ee2{z_\beta|z_\alpha}{f_k(z_\beta)},\quad
g_k(z_\beta)\!=\! f_k'(z_\beta)\!\sum_{j\in\pa(k)}w_{jk}\ee2{z_\alpha|z_\beta}{g_j(z_\alpha)} \label{eq:induction}
\end{eqnarray}
\end{theorem}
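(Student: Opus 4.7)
The plan is to prove both recursions by the same three-step template: first take the pointwise activation/gradient identity that already holds for every $x$, then condition on the appropriate summarization variable ($z_\alpha$ for the forward equation, $z_\beta$ for the backward) and use the Decorrelation hypothesis to factor out the gating term, and finally reweight the remaining expectation through the opposite layer's summarization variable using Focus of knowledge for the forward pass and Broadness of knowledge (plus Theorem~1) for the backward pass.

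For the forward identity, I would start from $f_j(\reg{j}) = f'_j(\reg{j}) \sum_{k\in\ch(j)} w_{jk} f_k(\reg{k})$ (which holds because $f_j$ depends only on $\reg{j}$), apply $\ee2{X_\alpha|z_\alpha}{\cdot}$ to both sides, and use the decorrelation of $f'_j$ and $f_j^{\raw}$ given $z_\alpha$ (the natural counterpart at node $j$ of the stated hypothesis) to factor out $f'_j(z_\alpha)$. The remaining term is $\ee2{X_\beta|z_\alpha}{f_k(X_\beta)}$; writing $\pr(x_\beta|z_\alpha) = \sum_{z_\beta}\pr(z_\beta|z_\alpha)\pr(x_\beta|z_\alpha,z_\beta)$ and then invoking Focus of knowledge to drop $z_\alpha$ from $\pr(x_\beta|z_\alpha,z_\beta)$ collapses the inner integral to $f_k(z_\beta)$, producing exactly $\ee2{z_\beta|z_\alpha}{f_k(z_\beta)}$.

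For the gradient identity, I would start from the already-marginalized recursion in Eqn.~\ref{eq:grad-collect}, namely $g_k(\reg{k}) = f'_k(\reg{k}) \sum_{j\in\pa(k)} w_{jk} g_j(\reg{k})$, apply $\ee2{X_\beta|z_\beta}{\cdot}$, and use Decorrelation of $g_k^{\raw}$ and $f'_k$ given $z_\beta$ to pull out $f'_k(z_\beta)$. What remains is $\ee2{X_\beta|z_\beta}{g_j(X_\beta)}$, where $g_j(X_\beta)$ is itself a marginal gradient. To lift it onto the parent's receptive field, I invoke Theorem~1 to rewrite $g_j(x_\beta) = \ee2{X_{\alpha,-\beta}|x_\beta}{g_j(X_\alpha)}$, then combine the two expectations: since $z_\beta$ is a deterministic function of $x_\beta$, one has $\pr(x_{\alpha,-\beta}|x_\beta) = \pr(x_{\alpha,-\beta}|x_\beta,z_\beta)$, so the product $\pr(x_{\alpha,-\beta}|x_\beta)\pr(x_\beta|z_\beta)$ collapses to $\pr(x_\alpha|z_\beta)$ and the whole expression becomes $\ee2{X_\alpha|z_\beta}{g_j(X_\alpha)}$. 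A final decomposition $\pr(x_\alpha|z_\beta) = \sum_{z_\alpha}\pr(z_\alpha|z_\beta)\pr(x_\alpha|z_\alpha,z_\beta)$ combined with Broadness of knowledge to drop $z_\beta$ from $\pr(x_\alpha|z_\alpha,z_\beta)$ then delivers $\ee2{z_\alpha|z_\beta}{g_j(z_\alpha)}$, as claimed.

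The hardest step is the backward one that bridges $\ee2{X_\beta|z_\beta}{g_j(X_\beta)}$ to $\ee2{z_\alpha|z_\beta}{g_j(z_\alpha)}$, because the two arguments live on different receptive fields and so a direct conditional-probability manipulation does not suffice; the passage requires both Theorem~1's tower structure and the observation that $z_\beta = z_\beta(x_\beta)$ makes conditioning on $z_\beta$ redundant once $x_\beta$ is fixed. A minor notational caveat worth stating up front is that the Decorrelation hypothesis as written is at node $k$ given $z_\beta$, while the forward equation requires the analogous decorrelation at node $j$ given $z_\alpha$; I would treat this as the uniform version of the assumption applied at every node of the teacher graph.
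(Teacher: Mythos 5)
Your proposal is correct and follows essentially the same route as the paper: condition the pointwise forward/backward recursions on $z_\alpha$ (resp.\ $z_\beta$), split off the gating term via the Decorrelation assumption, and expand $\pr(x_k|z_\alpha)$ (resp.\ $\pr(x_j|z_\beta)$) through the other layer's summarization variable using Focus (resp.\ Broadness) of knowledge. The only cosmetic difference is the backward bridge, where you invoke Theorem~1 together with $\pr(x_{j,-k}|x_k,z_\beta)=\pr(x_{j,-k}|x_k)$ to lift $g_j(x_k)$ to $g_j(x_j)$, while the paper factors $\pr(x|z_\beta)=\pr(x_j|z_\beta)\pr(x_{-j}|x_j)$ directly; likewise your caveat that Decorrelation must be read as holding at every node (so it applies to $f_j^{\mathrm{raw}}$ and $f'_j$ given $z_\alpha$) is exactly what the paper leaves implicit with ``similarly for $f_j(z_\alpha)$''.
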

One key property of this formulation is that, it incorporates data distributions $\pr(z_\alpha, z_\beta)$ into the gradient descent rules. This is important since running BP on different dataset is now formulated into the same framework with different probability, i.e., frequency counts of events. By studying which family of distribution leads to the desired property, we could understand BP better. 

For completeness, we also need to define boundary conditions. In the lowest level $L$, we could treat each input pixel (or a group of pixels) as a single event. Therefore, $f_k(z_\beta) = \ii{k = z_\beta}$. On the top level, as we have discussed, Eqn.~\ref{eqn:grad-top-level} applies and $g_j(z_\beta) = a_1\ii{j = z_\beta} - a_2\ii{j \neq z_\beta}$. 

The following theorem shows that the reformulation is exact if $z_\alpha$ has all information of the region. 
\begin{theorem}
If $\pr(x_j|z_\alpha)$ is a delta function for all $\alpha$, then all conditions in Thm.~\ref{thm:coarse-model} hold.
\end{theorem}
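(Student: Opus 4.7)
The plan is that each condition of Thm.~\ref{thm:coarse-model} collapses to a tautology once we note that the delta hypothesis forces $z_\alpha$ to be an injective encoding of $x_\alpha$ on its support. Concretely, for every receptive field $\alpha$ and every value $a$ with $\pr(z_\alpha=a)>0$, there is a unique $\phi_\alpha(a)$ such that $\pr(x_\alpha=\phi_\alpha(a)\mid z_\alpha=a)=1$; hence any quantity determined by $x_\alpha$ is almost surely equal to its evaluation at $\phi_\alpha(a)$ on the event $\{z_\alpha = a\}$. Because the hypothesis is quantified over all receptive fields, the same fact applies at scale $\beta = \rf{k}$, producing an analogous map $\phi_\beta$.

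First I would dispatch Focus of knowledge. The marginal $\pr(x_k\mid z_\beta)$ is the point mass at $\phi_\beta(z_\beta)$. Any joint event $\{z_\alpha = a, z_\beta = b\}$ of positive probability must be consistent with $\{z_\beta = b\}$, so $x_k$ is still almost surely equal to $\phi_\beta(b)$ under the joint conditioning, giving $\pr(x_k\mid z_\alpha, z_\beta) = \pr(x_k\mid z_\beta)$. Broadness of knowledge follows by the symmetric argument at the parent scale: $\pr(x_j\mid z_\alpha)$ is already a delta at $\phi_\alpha(z_\alpha)$, and further conditioning on $z_\beta$ cannot move that singleton support.

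For Decorrelation, I would use that by their very definition via Eqn.~\ref{eq:grad-collect} the quantities $g_k^{\mathrm{raw}}(\cdot)$, $f_k^{\mathrm{raw}}(\cdot)$ and $f'_k(\cdot)$ are functions of $x_k = x_\beta$ alone. Conditioned on $z_\beta$, the random variable $X_k$ is almost surely the constant $\phi_\beta(z_\beta)$, so each of these three expressions is almost surely a constant. Any pair of almost-surely-constant random variables is trivially uncorrelated, which handles both required pairs at once.

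No step is genuinely difficult; the only place demanding care is the bookkeeping around zero-probability events when lifting the unconditional delta statement into joint conditioning, but in the paper's discrete frequentist setup this just amounts to restricting to the support of $(z_\alpha, z_\beta)$, which is harmless.
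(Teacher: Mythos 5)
Your proposal is correct and follows essentially the same route as the paper's own proof: the delta hypothesis makes $x_\alpha$ (resp.\ $x_\beta$) an almost-sure function $\phi_\alpha(z_\alpha)$ (resp.\ $\phi_\beta(z_\beta)$), so the two conditional-independence conditions hold because further conditioning cannot move a singleton support, and decorrelation holds because conditionally constant quantities factor (the paper phrases this as $\ee2{X_j|z_\alpha}{h_1h_2}=\ee2{X_j|z_\alpha}{h_1}\ee2{X_j|z_\alpha}{h_2}$ via the delta integral, which is the same observation). No gap to report.
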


In general, $\pr(x_j|z_\alpha)$ is a distribution encoding how much information gets lost if we only know the factor $z_\alpha$. When we climb up the ladder, we lose more and more information while keeping the critical part for the classification. 
This is consistent with empirical observations~\citep{bau2017network}, in which the low-level features in DCNN are generic, and high-level features are more class-specific. 

\subsection{Matrix Formulation}
Eqn.~\ref{eq:induction} can be hard to deal with. If we group the nodes with the same reception field at the same level together (Fig.~\ref{fig:lcn}(d)), we have the matrix form ($\circ$ is element-wise multiplication):
\begin{table}
\centering
\begin{tabular}{|l||l|l|}
\hline
 & Dimension & Description \\
\hline\hline
$F_\alpha$, $\tilde G_\alpha$, $D_\alpha$ & $m_\alpha$-by-$n_\alpha$ & Activation $f_j(z_\alpha)$, gradient $\tilde g_j(z_\alpha)$ and gating prob $f'_j(z_\alpha)$ at group $\alpha$. \\
\hline
$W_{\beta\alpha}$ & $n_\beta$-by-$n_\alpha$ & Weight matrix that links group $\alpha$ and $\beta$ \\
\hline
$P_{\alpha\beta}$ & $m_\alpha$-by-$m_\beta$ & Prob $\pr(z_\beta|z_\alpha)$ of events at group $\alpha$ and $\beta$ \\
\hline
\end{tabular}
\caption{Matrix Notation. See  Eqn.~\ref{eqn:matrix-form-simplified}.}
\label{tbl:matrix-notation}
\end{table}

\begin{theorem}[Matrix Representation of Reformulation]
\begin{equation}
    F_\alpha\! =\!D_{\alpha} \!\circ\! \sum_{\beta\in \ch(\alpha)} P_{\alpha\beta} F_\beta W_{\beta\alpha} ,\quad
    \tilde G_\beta\! =\! D_\beta \!\circ\! \sum_{\alpha\in\pa(\beta)} P_{\alpha\beta}^T \tilde G_\alpha W_{\beta\alpha}^T  ,\quad \Delta W_{\beta\alpha}\! =\! (P_{\alpha\beta} F_\beta)^T \tilde G_\alpha
    \label{eqn:matrix-form-simplified}  
\end{equation}
\end{theorem}
See Tbl.~\ref{tbl:matrix-notation} for the notation. For this dynamics, we want $F^*_\omega = I_{n_\omega}$, i.e., the top $n_\omega$ neurons faithfully represents the classification labels. Therefore, the top level gradient is $G_\omega = I_{n_\omega} - F_\omega$. On the other side, for each region $\beta$ at the bottom layer, we have $F_\beta = I_{n_\beta}$, i.e., the input contains all the preliminary factors. For all regions $\alpha$ in the top-most and bottom-most layers, we have $n_\alpha=m_\alpha$.

\def\iter#1{{(#1)}}
\def\vert{\mathrm{vert}}
\def\CH{\mathrm{Conv}}
\def\rank{\mathrm{rank}}
\def\bninput#1{f^{(#1)}}
\def\bnzeromean#1{\hat f^{(#1)}}
\def\bnstandard#1{\tilde f^{(#1)}}
\def\bnoutput#1{\bar f^{(#1)}}

\def\bnvinput{\vf}
\def\bnvzeromean{\hat \vf}
\def\bnvstandard{\tilde \vf}
\def\bnvoutput{\bar \vf}

\section{Batch Normalization under Reformulation}
\label{sec:batch-norm-under-coarse-model}
Our reformulation naturally incorporates empirical regularization technique like Batch Normalization (BN)~\citep{batchnorm}. 

\begin{figure}
    \centering
    \includegraphics[width=0.8\textwidth]{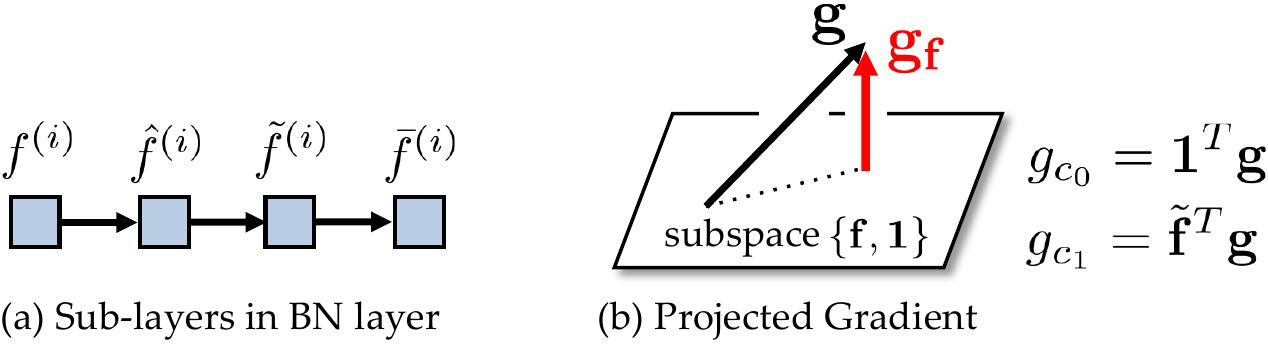}
    \caption{Batch Normalization (BN) as a projection. \textbf{(a)} Three sublayers in BN (zero-mean, unit-variance, affine). \textbf{(b)} The 
      gradient $\vg_\vf$ that is propagated down is a projection of input gradient $\vg$ onto the orthogonal complementary space spanned by $\{\vf, \vone\}$.}\label{fig:bn}
\end{figure}

\subsection{Batch Normalization as a Projection}
We start with a novel finding of Batch Norm: the back-propagated gradient through Batch Norm layer at a node $j$ is a projection onto the orthogonal complementary subspace spanned by all one vectors and the current activations of node $j$. 

Denote pre-batchnorm activations as $\bninput{i} = f_j(x_i)$ ($i = 1 \ldots N$). In Batch Norm, $\bninput{i}$ is whitened to be $\bnstandard{i}$, then linearly transformed to yield the output $\bnoutput{i}$:
\begin{equation}
    \bnzeromean{i} = \bninput{i} - \mu, \quad\bnstandard{i} = \bnzeromean{i} /\sigma, \quad\bnoutput{i} = c_1 \bnstandard{i} + c_0
\end{equation}
where $\mu = \frac{1}{N}\sum_i \bninput{i}$ and $\sigma^2 = \frac{1}{N}\sum_i (\bninput{i} - \mu)^2$ and $c_1$, $c_0$ are learnable parameters.

The original Batch Norm paper derives complicated and unintuitive weight update rules. With vector notation, the update has a compact form with clear geometric meaning. 

\begin{theorem}[Backpropagation of Batch Norm]
\label{thm:bn}
For a top-down gradient $\vg$, BN layer gives the following gradient update ($P^\perp_{\bnvinput, \vone}$ is the orthogonal complementary projection of subspace $\{\bnvinput, \vone\}$):
\begin{equation}
    \vg_\vf = J^{BN}(\bnvinput)\vg = \frac{c_1}{\sigma}P^\perp_{\bnvinput, \vone}\vg, \quad \vg_\vc = S(\bnvinput{})^T \vg
    \label{eq:batch-norm-projection}
\end{equation}
\end{theorem}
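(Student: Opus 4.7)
The plan is to work directly with the Jacobian of the Batch Norm map written in vector form, and then recognize it as a rescaled orthogonal projector by exploiting the fact that the centered vector $\hat{\bnvinput}$ is automatically orthogonal to $\vone$.

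First I would rewrite the three BN substeps in a single vector expression
\[
\bnvoutput = \frac{c_1}{\sigma}\bigl(\bnvinput - \mu\vone\bigr) + c_0\vone, \qquad \mu = \tfrac{1}{N}\vone^T\bnvinput, \qquad \sigma^2 = \tfrac{1}{N}\|\bnvinput-\mu\vone\|^2,
\]
and then compute $J^{BN}(\bnvinput) = \partial\bnvoutput/\partial\bnvinput$ by the chain rule, being careful that $\mu$ and $\sigma$ are themselves functions of $\bnvinput$. The centering Jacobian is $I - \tfrac{1}{N}\vone\vone^T$; for the variance term, since $\vone^T\hat{\bnvinput}=0$, the derivative simplifies to $\partial\sigma/\partial\bnvinput = \tfrac{1}{N\sigma}\hat{\bnvinput}^T$. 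Collecting terms and using $N\sigma^2 = \|\hat{\bnvinput}\|^2$, I expect to arrive at
\[
J^{BN}(\bnvinput) = \frac{c_1}{\sigma}\left[\, I - \tfrac{1}{N}\vone\vone^T - \tfrac{\hat{\bnvinput}\hat{\bnvinput}^T}{\|\hat{\bnvinput}\|^2}\,\right].
\]

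The key step is then recognizing the bracketed operator as $P^\perp_{\bnvinput,\vone}$. Because $\hat{\bnvinput} = \bnvinput - \mu\vone$ is orthogonal to $\vone$ by construction, $\{\vone/\sqrt{N},\ \hat{\bnvinput}/\|\hat{\bnvinput}\|\}$ is an orthonormal basis for $\mathrm{span}\{\vone,\bnvinput\}$, so $\tfrac{1}{N}\vone\vone^T + \hat{\bnvinput}\hat{\bnvinput}^T/\|\hat{\bnvinput}\|^2$ is exactly the orthogonal projector onto $\mathrm{span}\{\bnvinput,\vone\}$, and its complement is $P^\perp_{\bnvinput,\vone}$. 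Left-multiplying by $\vg$ then yields the first identity in \eqref{eq:batch-norm-projection}.

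For the second identity, the dependence of $\bnvoutput$ on the affine parameters is trivial: $\partial\bnvoutput/\partial c_1 = \bnvstandard$ and $\partial\bnvoutput/\partial c_0 = \vone$. Stacking these as the columns of a matrix $S(\bnvinput) = [\bnvstandard,\vone]$ (matching the paper's notation) immediately gives $\vg_\vc = S(\bnvinput)^T\vg$. The only real obstacle is the bookkeeping in the chain rule for $\sigma$: one has to notice early that $\vone^T\hat{\bnvinput}=0$ kills a term that would otherwise clutter the projection structure; once that simplification is made the geometric interpretation falls out cleanly.
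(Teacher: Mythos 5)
Your proposal is correct and follows essentially the same route as the paper: compute the Jacobian of the BN forward map and recognize it as $\frac{c_1}{\sigma}$ times the orthogonal projector onto the complement of $\mathrm{span}\{\bnvinput,\vone\}$ (using $\vone^T\hat{\bnvinput}=0$), with $\vg_\vc = S(\bnvinput)^T\vg$ falling out of the affine step. The only cosmetic difference is that the paper obtains the Jacobian by composing the sublayer Jacobians (centering projector and the derivative of $\vf/\|\vf\|$) rather than differentiating the single closed-form expression with $\mu(\bnvinput),\sigma(\bnvinput)$ explicit, which is an equivalent bookkeeping choice.
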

Intuitively, the back-propagated gradient $J^{BN}(\bnvinput)\vg$ is zero-mean and perpendicular to the input activation $\bnvinput$ of BN layer, as illustrated in Fig.~\ref{fig:bn}. Unlike~\citep{kohler2018towards} that analyzes BN in an approximate manner, in Thm.~\ref{thm:bn} we do not impose any assumptions. 

\subsection{Batch Norm under the reformulation}
The analysis of Batch Norm is compatible with the reformulation and we arrive at similar backpropagation rule, by noticing that $\ee2{x}{f_j(x)} = \ee2{z_\alpha}{f_j(z_\alpha)}$: 
\begin{equation}
\mu = \ee2{z_\alpha}{f_j},\quad \sigma^2 = \ee2{z_\alpha}{(f_j(z_\alpha) - \mu)^2}, \quad J^{BN}(\bnvinput{}) = \frac{c_1}{\sigma}P^\perp_{\bnvinput, \vone} \label{eq:batch-norm-projection-z-alpha}
\end{equation}
Note that we still have the projection property, but under the new inner product $\langle f_j, g_j\rangle_{z_\alpha} = \ee2{z_\alpha}{f_j(z_\alpha)g_j(z_\alpha)}$ and norm $\|f\|_{z_\alpha} = \langle f, f\rangle_{z_\alpha}^{1/2}$.  

\def\sz{\mathrm{sz}}

\section{Example applications of proposed theoretical framework}
\label{sec:property-of-coarse-model}
With the help of the theoretical framework, we now can analyze interesting structures of gradient descent in deep models, when the data distribution $\pr(z_\alpha, z_\beta)$ satisfies specific conditions. Here we give two concrete examples: the role played by nonlinearity and in which condition disentangled representation can be achieved. Besides, from the theoretical framework, we also give general comments on multiple issues (e.g., overfitting, GD versus SGD) in deep learning. 

\subsection{Nonlinear versus linear}
\label{sec:nonlinear-vs-linear}
In the formulation, $m_\alpha$ is the number of possible events within a region $\alpha$, which is often exponential with respect to the size $\sz(\alpha)$ of the region. The following analysis shows that a linear model cannot handle it, even with exponential number of nodes $n_\alpha$, while a nonlinear one with ReLU can. 

\begin{definition}[Convex Hull of a Set]
We define the convex hull $\CH(P)$ of $m$ points $P \subset \rr^n$ to be $\CH(P) = \left\{P\va, \va\in\spx{n-1}\right\}$, where $\spx{n-1} = \left\{\va\in \rr^n, a_i \ge 0, \sum_i a_i = 1\right\}$. A row $p_j$ is called \emph{vertex} if $p_j \notin \CH(P \backslash p_j)$.  
\end{definition}

\begin{definition}
A matrix $P$ of size $m$-by-$n$ is called \emph{$k$-vert}, or $\vert(P) = k \le m$, if its $k$ rows are vertices of the convex hull generated by its rows. $P$ is called \emph{all-vert} if $k = m$.
\end{definition}

\begin{theorem}[Expressibility of ReLU Nonlinearity]
\label{thm:sufficient-node}
Assuming $m_\alpha = n_\alpha = \mathcal{O}(\exp(\mathrm{sz}(\alpha)))$, where $\sz(\alpha)$ is the size of receptive field of $\alpha$. If each $P_{\alpha\beta}$ is all-vert, then: ($\omega$ is top-level receptive field) 
\begin{equation}
    \min_W Loss_\mathrm{ReLU}(W) = 0, \quad \min_W Loss_\mathrm{Linear}(W) = \mathcal{O}(\exp(\sz(\omega)))
\end{equation}
\end{theorem}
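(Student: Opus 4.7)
The plan is to prove the two halves separately: a constructive bottom-up argument for the ReLU case, and a rank-counting argument for the linear case. Throughout I keep the matrix reformulation Eqn.~\ref{eqn:matrix-form-simplified} as my working tool and allow the bias term (which the paper declares admissible).

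\textbf{ReLU half.} I would induct up the tree, maintaining the invariant $F_\alpha = I_{m_\alpha}$ at every node $\alpha$. The base case is the bottom layer, where $F_\beta = I_{n_\beta}$ by assumption. For the inductive step at a node $\alpha$ whose children all satisfy $F_\beta = I_{m_\beta}$, I pick one distinguished child $\beta^*$, zero out all other incoming weights, and write the pre-activation as $P_{\alpha\beta^*} W_{\beta^*\alpha} + \vone b^T$. Because $P_{\alpha\beta^*}$ is all-vert, every row $p_j$ is a vertex of $\CH$ of the rows, so by the separating-hyperplane theorem there exists $w_j \in \rr^{m_{\beta^*}}$ with $p_j^T w_j$ strictly larger than $p_i^T w_j$ for every $i \neq j$. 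After rescaling $w_j$ and choosing a bias $b_j$ between these two values, the $j$-th pre-activation column is strictly positive only at coordinate $j$; the ReLU gating $D_\alpha$ therefore produces the basis vector $e_j$. Assembling these $w_j$ as the columns of $W_{\beta^*\alpha}$ gives $F_\alpha = I_{m_\alpha}$, closing the induction. At the top this forces $F_\omega = I_{m_\omega} = F_\omega^*$, so $\min_W Loss_\mathrm{ReLU}(W) = 0$.

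\textbf{Linear half.} Dropping the gating, Eqn.~\ref{eqn:matrix-form-simplified} becomes $F_\alpha = \sum_{\beta \in \ch(\alpha)} P_{\alpha\beta} F_\beta W_{\beta\alpha}$, so the full forward map is linear in the leaf inputs and the rank of $F_\omega$ admits a clean recursive bound. Each summand factors through the $m_\beta$-dimensional child activation, and subadditivity gives $\rank(F_\alpha) \le \sum_{\beta \in \ch(\alpha)} \min(\rank(F_\beta), m_\beta)$. Unwinding this recursion down to the leaves (where $F_{\mathrm{leaf}} = I_{m_{\mathrm{leaf}}}$ has rank $m_{\mathrm{leaf}} = O(1)$ when leaf receptive fields are constant-size), one obtains $\rank(F_\omega) = O(\text{poly}(\sz(\omega)))$, which is exponentially smaller than $m_\omega = \Theta(\exp(\sz(\omega)))$. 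Comparing against the target $F_\omega^* = I_{m_\omega}$, any reasonable entrywise loss (e.g.\ Frobenius or softmax cross-entropy summed over the $m_\omega$ one-hot rows) satisfies $Loss \ge m_\omega - \rank(F_\omega) = \Omega(\exp(\sz(\omega)))$, yielding the claimed bound.

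\textbf{Main obstacle.} The trickiest step will be making the ReLU construction airtight across depths: I must check that the single-child selection remains valid at every layer (nothing breaks when $\beta^*$'s own construction uses a different child at the layer below), that the strict inequalities from the separating hyperplane are uniform enough to survive a finite rescale into exact zeros after ReLU, and that the bias term the paper suppresses for brevity is genuinely available in this matrix form. For the linear half the only subtle point is the recursive rank propagation through a deep tree, which must be carried out carefully so that the eventual bound on $\rank(F_\omega)$ is polynomial (not merely sub-exponential) in $\sz(\omega)$ and thus convincingly beaten by $m_\omega$.
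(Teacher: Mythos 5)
Your proposal is correct and follows essentially the same route as the paper: an inductive construction using the all-vert (vertex/separating-hyperplane) property to make each $F_\alpha$ the identity after ReLU, and for the linear case a recursive rank bound combined with the low-rank approximation lower bound (Eckart--Young--Mirsky, which is exactly your $Loss \ge m_\omega - \rank(F_\omega)$ inequality). The only cosmetic differences are that the paper distributes the construction over all children with diagonal entries $1/|\ch(\alpha)|$ (and uses the bias explicitly, so your worry there is moot) rather than zeroing out all but one child, and it states the rank growth as linear in $\sz(\omega)$ rather than merely polynomial.
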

Note that here $Loss(W) \equiv \|F_\omega - I\|^2_F$. This shows the power of nonlinearity, which guarantees full rank of output, even if the matrices involved in the multiplication are low-rank. The following theorem shows that for intermediate layers whose input is not identity, the all-vert property remains. 
\begin{theorem}
    \textbf{\emph{(1)}} If $F$ is full row rank, then $\vert(PF) = \vert(P)$. \textbf{\emph{(2)}} $PF$ is all-vert iff $P$ is all-vert.
\end{theorem}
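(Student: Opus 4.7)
The plan is to view right-multiplication by $F$ as a linear map on row vectors, show it is injective under the full row rank hypothesis, and then transfer the convex-hull geometry across this map.

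First I would set up the map. Write $P$ as an $m\times n$ matrix with rows $p_1,\ldots,p_m\in\rr^n$, and let $F$ be $n\times k$. Define the linear map $\phi:\rr^n\to\rr^k$ by $\phi(v)=vF$ (acting on row vectors), so the rows of $PF$ are exactly $\phi(p_1),\ldots,\phi(p_m)$. The key algebraic step is to show $\phi$ is injective when $F$ has full row rank $n$: if $vF=\vzero$, then $vFF^\trans=\vzero$, and $FF^\trans$ is an invertible $n\times n$ matrix under the full row rank assumption, so $v=\vzero$.

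Next I would use injectivity of $\phi$ to match vertices on both sides. Since $\phi$ is linear and $\phi(\vzero)=\vzero$, for any nonnegative $\lambda_i$ with $\sum_{i\ne j}\lambda_i=1$,
\begin{equation}
p_j=\sum_{i\ne j}\lambda_i p_i \iff \phi(p_j)-\sum_{i\ne j}\lambda_i \phi(p_i)=\vzero \iff \phi(p_j)=\sum_{i\ne j}\lambda_i \phi(p_i),
\end{equation}
where the middle equivalence uses injectivity of $\phi$ to turn $\phi\bigl(p_j-\sum_{i\ne j}\lambda_i p_i\bigr)=\vzero$ into $p_j-\sum_{i\ne j}\lambda_i p_i=\vzero$. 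Therefore $p_j\in\CH(P\setminus p_j)$ iff $\phi(p_j)\in\CH((PF)\setminus \phi(p_j))$, which means the indices of vertex rows of $P$ and vertex rows of $PF$ coincide exactly. Part \textbf{(1)} follows: $\vert(PF)=\vert(P)$.

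For part \textbf{(2)}, the forward implication ``$PF$ all-vert $\Rightarrow$ $P$ all-vert'' needs no rank hypothesis, since if some $p_j$ were a convex combination of the other $p_i$'s then the same combination would express $p_jF$ in terms of the other rows of $PF$, contradicting all-vert. The reverse implication is the direct corollary of part \textbf{(1)}: under the full row rank hypothesis carried over from \textbf{(1)}, $P$ all-vert means $\vert(P)=m$, hence $\vert(PF)=m$, so $PF$ is all-vert. The only real substance is the injectivity argument in part \textbf{(1)}; once that is in place the rest is bookkeeping about convex combinations, and I do not anticipate a serious obstacle beyond being careful that ``not a vertex'' is formulated as membership in the convex hull of the remaining rows (so that $\phi$ transports it both ways).
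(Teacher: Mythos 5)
Your proof is correct and takes essentially the same route as the paper: the paper transfers convex combinations of rows across right-multiplication by $F$ and inverts them using a pseudo-inverse $F'$ with $FF'=I$, while you accomplish the identical step by proving injectivity of $v\mapsto vF$ from the invertibility of $FF^T$, then deduce (2) from (1) just as the paper does. Your side remark that the implication from $PF$ all-vert to $P$ all-vert needs no rank hypothesis is a small, valid refinement but not a different method.
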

This means that if all $P_{\alpha\beta}$ are all-vert and its input $F_\beta$ is full-rank, then with the same construction of Thm.~\ref{thm:sufficient-node}, $F_\alpha$ can be made identity. In particular, if we sample $W$ randomly, then with probability $1$, all $F_\beta$ are full-rank, in particular the top-level input $F_1$. Therefore, using top-level $W_1$ alone would be sufficient to yield zero generalization error, as shown in the previous works that random projection could work well. 

\def\bfactor#1#2{#1[#2]}

\begin{figure}
    \centering
    \includegraphics[width=\textwidth]{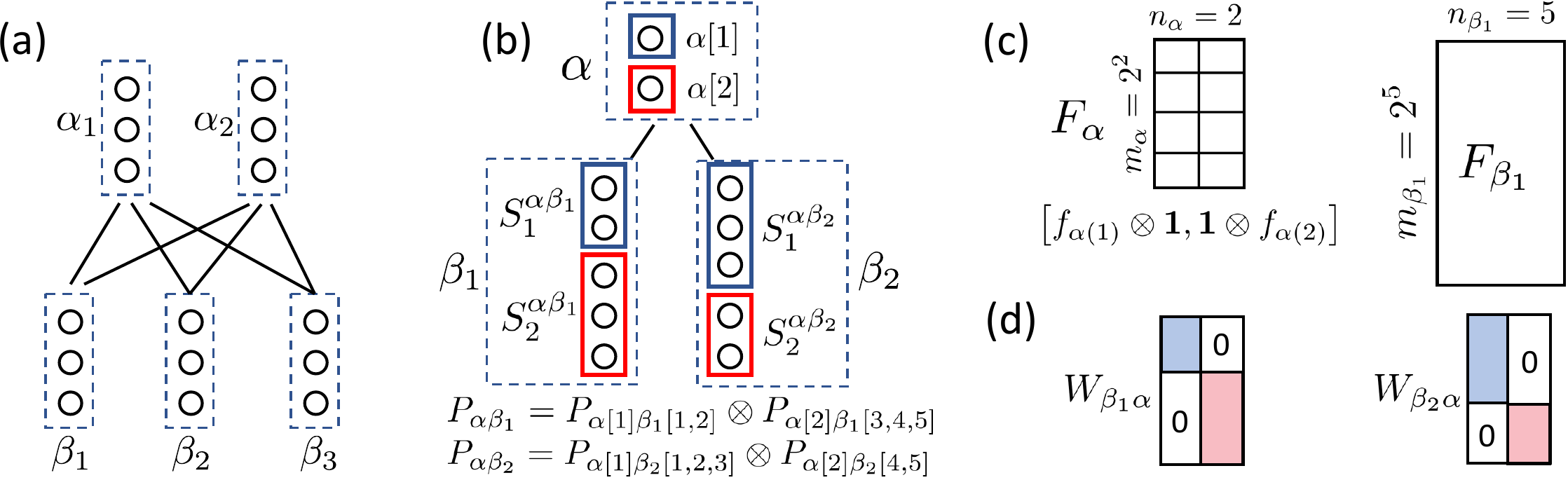}
    \caption{disentangled representation. \textbf{(a)} Nodes are grouped according to regions. \textbf{(b)} An example of one parent region $\alpha$ ($2$ nodes) and two child regions $\beta_1$ and $\beta_2$ ($5$ nodes each). We assume factorization property of data distribution $P$. \textbf{(c)} disentangled activations, \textbf{(d)} Separable weights.}
    \label{fig:disentangled-reprsentation}
\end{figure}

\subsection{disentangled Representation}
The analysis in Sec.~\ref{sec:nonlinear-vs-linear} assumes that $n_\alpha = m_\alpha$, which means that we have sufficient nodes, \emph{one neuron for one event}, to convey the information forward to the classification level. In practice, this is never the case. When $n_\alpha \ll m_\alpha = \mathcal{O}(\exp(\mathrm{sz}(\alpha)))$ and the network needs to represent the information in a proper way so that it can be sent to the top level. Ideally, if the factor $z_\alpha$ can be written down as a list of binary factors: $z_\alpha = \left[z_{\bfactor\alpha1}, z_{\bfactor\alpha2}, \ldots, z_{\bfactor\alpha{j}}\right]$, the output of a node $j$ could represent $z_{\bfactor{\alpha}{j}}$, so that all $m_\alpha$ events can be represented concisely with $n_\alpha$ nodes. 

To come up with a complete theory for disentangled representation in deep nonlinear network is far from trivial and beyond the scope of this paper. In the following, we make an initial attempt by constructing factorizable $P_{\alpha\beta}$ so that disentangled representation is possible in the forward pass. First we need to formally define what is disentangled representation:
\begin{definition}
The activation $F_\alpha$ is \emph{disentangled}, if its $j$-th column $F_{\alpha,:j} = \vone \otimes \ldots \!\otimes\! \vf_{\bfactor{\alpha}{j}} \!\otimes\! \ldots\! \otimes\! \vone $, where each $\vf_{\bfactor{\alpha}{j}}$ and $\vone$ is a $2$-by-$1$ vector.
\end{definition}
\begin{definition}
The gradient $\tilde G_\alpha$ is \emph{disentangled}, if its $j$-th column $\tilde G_{\alpha,:j} = \vp_{\bfactor{\alpha}{1}} \otimes \ldots \!\otimes\!\tilde\vg_{\bfactor{\alpha}{j}} \!\otimes\! \ldots\! \otimes\! \vp_{\bfactor{\alpha}{n_\alpha}} $, where $\vp_{\bfactor{\alpha}{j}} = \left[\pr(\bfactor{\alpha}{j} = 0), \pr(\bfactor{\alpha}{j} = 1)\right]^T$ and $\tilde \vg_{\bfactor{\alpha}{j}}$ is a $2$-by-$1$ vector.
\end{definition}
Intuitively, this means that each node $j$ represents the binary factor $z_\bfactor{\alpha}{j}$. A follow-up question is whether such disentangled properties carries over layers in the forward pass. It turns out that the disentangled structure carries if the data distribution and weights have compatible structures:  

\begin{definition}
The weights $W_{\beta\alpha}$ is \emph{separable} with respect to a disjoint set $\{S^{\alpha\beta}_i\}$, if $W_{\beta\alpha} = \mathrm{diag}\left(W_{\beta\alpha}[S^{\alpha\beta}_1, 1], W_{\beta\alpha}[S^{\alpha\beta}_2, 2], \ldots, W_{\beta\alpha}[S^{\alpha\beta}_{n_\alpha}, n_\alpha]\right)$.
\end{definition}

\begin{theorem}[Disentangled Forward]
If for each $\beta\in\ch(\alpha)$,  $P_{\alpha\beta}$ can be written as a tensor product $P_{\alpha\beta} = \bigotimes_i P_{\bfactor{\alpha}{i}\bfactor{\beta}{S^{\alpha\beta}_i}}$ where $\{S^{\alpha\beta}_i\}$ are $\alpha\beta$-dependent disjointed set, $W_{\beta\alpha}$ is separable with respect to $\{S^{\alpha\beta}_i\}$, $F_\beta$ is disentangled, then $F_\alpha$ is also disentangled (with/without ReLU /Batch Norm).
\end{theorem}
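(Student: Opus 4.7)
The plan is to track how the disentangled tensor structure of $F_\beta$ propagates through the four operations that make up the forward update in Eqn.~\ref{eqn:matrix-form-simplified}: left multiplication by $P_{\alpha\beta}$, right multiplication by $W_{\beta\alpha}$, summation over $\beta\in\ch(\alpha)$, and finally the Hadamard product with $D_\alpha$. The key algebraic identities I would lean on are the mixed product property $(A\otimes B)(C\otimes D) = AC\otimes BD$, its Hadamard analogue $(A\otimes B)\circ(C\otimes D) = (A\circ C)\otimes(B\circ D)$, and the fact that every row-stochastic matrix $P$ satisfies $P\vone=\vone$.

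First I would fix a child $\beta\in\ch(\alpha)$ and examine the $j$-th column of $P_{\alpha\beta}F_\beta$. Writing $F_{\beta,:j}=\bigotimes_k \vu_k$ with $\vu_j=\vf_{\bfactor{\beta}{j}}$ and $\vu_k=\vone$ for $k\neq j$, I would regroup the Kronecker factors along the partition $\{S^{\alpha\beta}_i\}$ so that they align with the tensor factorization $P_{\alpha\beta}=\bigotimes_i P_{\bfactor{\alpha}{i}\bfactor{\beta}{S^{\alpha\beta}_i}}$. Applying the mixed product property slot by slot, every super-slot $i$ except the unique one $i(j)$ with $j\in S^{\alpha\beta}_{i(j)}$ feeds an all-ones vector into $P_{\bfactor{\alpha}{i}\bfactor{\beta}{S^{\alpha\beta}_i}}$, which returns $\vone$; hence the column of $P_{\alpha\beta}F_\beta$ indexed by $j$ takes the disentangled form $\vone\otimes\cdots\otimes\vv_j\otimes\cdots\otimes\vone$ with a single non-trivial slot at position $i(j)$ in the parent tensor space.

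Next I would right-multiply by $W_{\beta\alpha}$. Separability of $W_{\beta\alpha}$ says its $i$-th column is supported only on rows indexed by $S^{\alpha\beta}_i$, so the $i$-th column of $P_{\alpha\beta}F_\beta W_{\beta\alpha}$ is $\sum_{k\in S^{\alpha\beta}_i} W_{\beta\alpha}[k,i]\,(P_{\alpha\beta}F_\beta)_{:,k}$. By the previous step, every summand in this sum carries its non-trivial tensor slot at the same parent position $i$, so the linear combination stays disentangled with special slot $i$. Summing over $\beta\in\ch(\alpha)$ preserves this structure because the position of the special slot depends only on the output column index $i$, not on $\beta$. To fold in $D_\alpha$, I would observe that the pre-activation column is disentangled, so a ReLU gating value $f'_j$ is a pointwise function of the single subfactor $z_{\bfactor{\alpha}{j}}$ and $D_\alpha$ itself is disentangled with the matching slot pattern; the Hadamard--Kronecker identity then guarantees that $D_\alpha\circ(\cdot)$ preserves disentanglement column by column. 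For the Batch Norm variant I would argue that subtracting the scalar mean $\mu$ and dividing by $\sigma$ only rescale the single non-trivial factor, while the affine shift $c_0\vone$ can be absorbed into any one tensor slot using $\vone=\vone\otimes\cdots\otimes\vone$.

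The main obstacle I anticipate is bookkeeping: consistently matching the tensor-slot indexing used to describe $F_\beta$ with the partition $\{S^{\alpha\beta}_i\}$ that simultaneously governs the factorization of $P_{\alpha\beta}$ and the sparsity pattern of $W_{\beta\alpha}$. Once that matching is spelled out explicitly, each step reduces to a short application of the mixed product or Hadamard--Kronecker identity. A secondary subtle point is the Batch Norm case, where one must verify that absorbing the affine shift $c_0\vone$ into a single tensor slot really does yield a disentangled column; this follows cleanly from $\vone=\vone\otimes\cdots\otimes\vone$ but is easy to overlook.
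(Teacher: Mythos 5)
Your proposal is correct and takes essentially the same route as the paper's proof: the paper likewise expands each column of $P_{\alpha\beta}F_\beta$ via the Kronecker mixed-product property together with $P\vone=\vone$, uses separability of $W_{\beta\alpha}$ to keep the nontrivial tensor slot of each output column aligned with its column index, and then closes under summation over $\beta$, ReLU gating, and Batch Norm by its Lemma on disentangled representations. The only difference is presentational — the paper works the concrete case $n_\alpha=2$, $S=\{\{1,2\},\{3\}\}$ and states the general case is analogous, whereas you carry out the general slot bookkeeping explicitly.
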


If the bottom activations are disentangled, by induction, all activations will be disentangled. The next question is whether gradient descent preserves such a structure. The answer is also conditionally yes:

\begin{theorem}[Separable Weight Update]
If $P_{\alpha\beta} = \bigotimes_i P_{\bfactor{\alpha}{i}\bfactor{\beta}{S_i}}$, $F_\beta$ and $\tilde G_\alpha$ are both disentangled, $\vone^T \tilde G_\alpha = \vzero$, then the gradient update $\Delta W_{\beta\alpha}$ is separable with respect to $\{S_i\}$.
\end{theorem}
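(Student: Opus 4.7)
The plan is to substitute directly into $\Delta W_{\beta\alpha} = (P_{\alpha\beta} F_\beta)^T \tilde G_\alpha$ and show that the $(j,k)$-entry vanishes whenever $j \notin S_k$, which is precisely the separability conclusion. Because all three ingredients ($P_{\alpha\beta}$, $F_\beta$, $\tilde G_\alpha$) are assumed to be Kronecker products, the computation should reduce to a product of inner products of small $2$-dimensional vectors, and the vanishing will come from the zero-sum condition on $\tilde G_\alpha$.

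First I would analyze $P_{\alpha\beta} F_{\beta,:j}$. The disentangled form gives $F_{\beta,:j} = \vone \otimes \cdots \otimes \vf_{\bfactor{\beta}{j}} \otimes \cdots \otimes \vone$ as a tensor of $n_\beta$ two-dimensional factors. To match the factorization $P_{\alpha\beta} = \bigotimes_{i=1}^{n_\alpha} P_{\bfactor{\alpha}{i}\bfactor{\beta}{S_i}}$, I would regroup the $n_\beta$ factors of $F_{\beta,:j}$ according to the partition $\{S_i\}$: in every group $S_i$ with $i \neq i^*$ (where $i^*$ is the unique index with $j \in S_{i^*}$), the grouped factor is just $\vone$; in group $S_{i^*}$ the grouped factor is a tensor containing the single nontrivial vector $\vf_{\bfactor{\beta}{j}}$, which I will call $\vec v_j$. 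Applying the mixed-product rule $(\bigotimes A_i)(\bigotimes u_i) = \bigotimes (A_i u_i)$ together with the stochasticity $P_{\bfactor{\alpha}{i}\bfactor{\beta}{S_i}} \vone = \vone$ (rows of a conditional-probability matrix sum to one), I obtain
\begin{equation}
P_{\alpha\beta} F_{\beta,:j} = \vone \otimes \cdots \otimes \bigl(P_{\bfactor{\alpha}{i^*}\bfactor{\beta}{S_{i^*}}} \vec v_j\bigr) \otimes \cdots \otimes \vone,
\end{equation}
a Kronecker product with $\vone$ in every slot except $i^*$.

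Next I would exploit the matching tensor structure of $\tilde G_{\alpha,:k} = \vp_{\bfactor{\alpha}{1}} \otimes \cdots \otimes \tilde\vg_{\bfactor{\alpha}{k}} \otimes \cdots \otimes \vp_{\bfactor{\alpha}{n_\alpha}}$, whose slots are indexed the same way as those of $P_{\alpha\beta} F_{\beta,:j}$. Using $(\bigotimes a_i)^T (\bigotimes b_i) = \prod_i a_i^T b_i$, the entry $\Delta W_{\beta\alpha}[j,k] = (P_{\alpha\beta} F_{\beta,:j})^T \tilde G_{\alpha,:k}$ factors into $n_\alpha$ scalar inner products. The zero-sum hypothesis $\vone^T \tilde G_\alpha = \vzero$ together with $\vone^T \vp_{\bfactor{\alpha}{i}} = 1$ forces $\vone^T \tilde\vg_{\bfactor{\alpha}{k}} = 0$ for each $k$. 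Hence if $k \neq i^*$, the $k$-th factor in the product is $\vone^T \tilde\vg_{\bfactor{\alpha}{k}} = 0$, so $\Delta W_{\beta\alpha}[j,k] = 0$. If $k = i^*$ (equivalently, $j \in S_k$), the surviving factor is $(P_{\bfactor{\alpha}{k}\bfactor{\beta}{S_k}} \vec v_j)^T \tilde\vg_{\bfactor{\alpha}{k}}$, giving a non-trivial value. This pattern of support is exactly the separability of $\Delta W_{\beta\alpha}$ with respect to $\{S_i\}$.

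The only slightly delicate step will be the reindexing in the first paragraph: the tensor factorization of $F_\beta$ has $n_\beta$ slots while $P_{\alpha\beta}$ has only $n_\alpha$ slots, so one must carefully show that regrouping the $\vone$ factors of $F_{\beta,:j}$ according to the partition $\{S_i\}$ produces a tensor whose slots align one-to-one with those of $P_{\alpha\beta}$; this is where the partition hypothesis $\bigsqcup_i S_i = \{1,\dots,n_\beta\}$ is used. Once that bookkeeping is in place, the rest is purely mechanical Kronecker algebra together with the two normalization facts (rows of $P$ sum to one, columns of $\tilde G_\alpha$ sum to zero), and no use of ReLU gating or Batch Norm is required.
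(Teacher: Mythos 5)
Your proposal is correct and follows essentially the same route as the paper's proof: expand $\Delta W_{\beta\alpha}=(P_{\alpha\beta}F_\beta)^T\tilde G_\alpha$, use the mixed-product rule on the Kronecker structure together with $P_{\bfactor{\alpha}{i}\bfactor{\beta}{S_i}}\vone=\vone$ and $\vone^T\vp_{\bfactor{\alpha}{i}}=1$, and kill the off-block entries via $\vone^T\tilde\vg_{\bfactor{\alpha}{k}}=0$. The only difference is presentational — you argue entrywise in the general case, while the paper carries out the same block computation on a representative example ($n_\alpha=2$, $n_\beta=3$) and notes the general case is analogous.
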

Therefore, with disentangled $F_\beta$ and $\tilde G_\alpha$ and centered gradient $\vone^T \tilde G_\alpha = \vzero$, the separable structure is conserved over gradient descent, given the initial $W^{(0)}_{\beta\alpha}$ is separable. Note that centered gradient is guaranteed if we insert Batch Norm (Eqn.~\ref{eq:batch-norm-projection-z-alpha}) after linear layers. And the activation $F$ remains disentangled if the weights are separable. 

The hard part is whether $\tilde G_\beta$ remains disentangled during backpropagation, if $\{\tilde G_\alpha\}_{\alpha\in\pa(\beta)}$ are all disentangled. If so, then the disentangled representation is self-sustainable under gradient descent. This is a non-trivial problem and generally requires structures of data distribution. We put some discussion in the Appendix and leave this topic for future work.

\subsection{Explanation of common behaviors in Deep Learning}
In the proposed formulation, the input $x$ in Eqn.~\ref{eq:induction} is integrated out, and the data distribution is now encoded into the probabilistic distribution $\pr(z_\alpha, z_\beta)$, and their marginals. A change of such distribution means the input distribution has changed. For the first time, we can now analyze many practical factors and behaviors in the DL training that is traditionally not included in the formulation.

\textbf{Over-fitting.} Given finite number of training samples, there is always error in estimated factor-factor distribution $\tpr(z_\alpha, z_\beta)$ and factor-observation distribution $\tpr(x_\alpha|z_\alpha)$. In some cases, a slight change of distribution would drastically change the optimal weights for prediction, which is overfitting. 

Here is one example. Suppose there are two different kinds of events at two disjoint reception fields: $z_\alpha$ and $z_\gamma$. The class label is $z_\omega$, which equals $z_\alpha$ but is not related to  $z_\gamma$. Therefore, we have:
\begin{equation}
    \tpr(z_\omega = 1|z_\alpha=1) = 1,\quad \tpr(z_\omega = 1|z_\alpha=0) = 0
\end{equation}
Although $z_\gamma$ is unrelated to the class label $z_\omega$, with finite samples $z_\gamma$ could show spurious correlation: 
\begin{equation}
    \tpr(z_\omega = 1|z_\gamma=1) = 0.5 + \epsilon,\quad \tpr(z_\omega = 1|z_\gamma=0) = 0.5-\epsilon
\end{equation}

\begin{figure}
    \centering
    \includegraphics[width=0.9\textwidth]{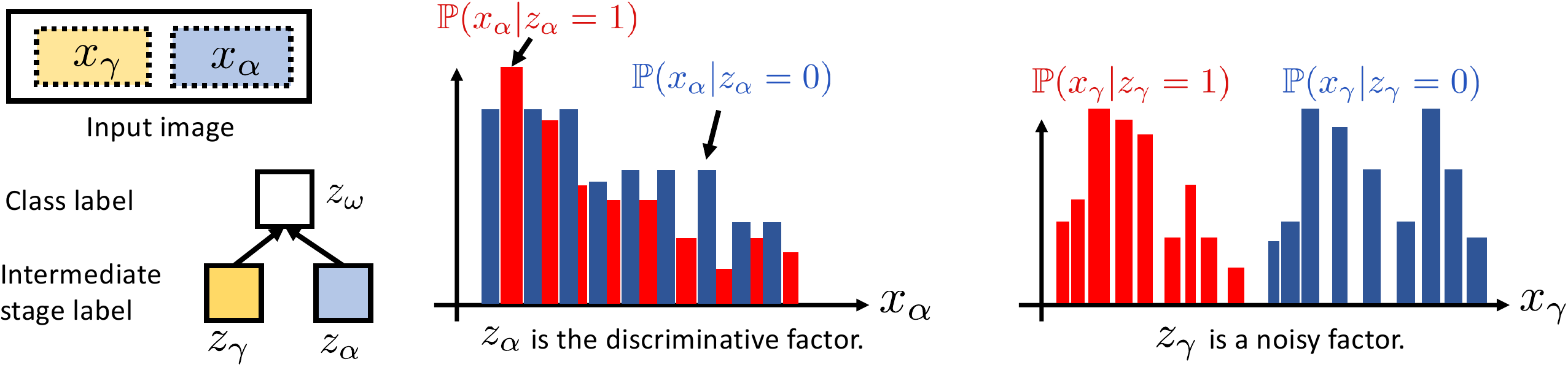}
    \caption{Overfitting Example}
    \label{fig:overfitting}
\end{figure}

On the other hand, as shown in Fig.~\ref{fig:overfitting}, $\pr(x_\alpha|z_\alpha)$ contains a lot of detailed structures and is almost impossible to separate in the finite sample case, while $\pr(x_\gamma|z_\gamma)$ could be well separated for $z_\gamma = 0/1$. Therefore, for node $j$ with $\rf{j} = \alpha$, $f_j(z_\alpha) \approx \mathrm{constant}$ (input almost indistinguishable): 
\begin{equation}
    \Delta w_j = \ee2{z_\alpha}{f_j(z_\alpha)g_0(z_\alpha)} \approx 0
\end{equation}
where $g_0(z_\alpha) = \ee2{z_\omega|z_\alpha}{g_0(z_\omega)} = \left\{ \begin{array}{cc}
1 & z_\alpha = 1 \\
-1 & z_\alpha = 0
\end{array}\right.$, which is a strong gradient signal backpropagated from the top softmax level, since $z_\alpha$ is strongly correlated with $z_\omega$. For node $k$ with $\rf{k} = \gamma$, an easy separation of the input (e.g., random initialization) yields distinctive $f_k(z_\gamma)$. Therefore, 
\begin{equation}
    \Delta w_k = \ee2{z_\gamma}{f_j(z_\gamma)g_0(z_\gamma)} > 0
\end{equation}
where $g_0(z_\gamma) = \ee2{z_\omega|z_\gamma}{g_0(z_\omega)} = \left\{ \begin{array}{cc}
2\epsilon & z_\gamma = 1 \\
-2\epsilon & z_\gamma = 0
\end{array}\right.$, a weak signal because of $z_\gamma$ is (almost) unrelated to the label. Therefore, we see that the weight $w_j$ that links to meaningful receptive field $z_\alpha$ does not receive strong gradient, while the weight $w_k$ that links to irrelevant (but spurious) receptive field $z_\gamma$ receives strong gradient. This will lead to overfitting. 

With more data, over-fitting is alleviated since (1) $\tpr(z_\omega|z_\gamma)$ becomes more accurate and $\epsilon \rightarrow 0$; (2) $\tpr(x_\alpha|z_\alpha)$ starts to show statistical difference for $z_\alpha=0/1$ and thus $f_j(z_\alpha)$ shows distinctiveness.

Note that there exists a \textbf{second} explanation: we could argue that $z_\gamma$ is a \emph{true} but \emph{weak} factor that contributes to the label, while $z_\alpha$ is a \emph{fictitious} discriminative factor, since the appearance difference between $z_\alpha=0$ and $z_\alpha=1$ (i.e., $\tpr(x_\alpha|z_\alpha)$ for $\alpha=0/1$) could be purely due to noise and thus should be neglected. With finite number of samples, these two cases are essentially indistinguishable. Models with different induction bias might prefer one to the other, yielding drastically different generalization error. For neural network, SGD prefers the second explanation but if under the pressure of training, it may also explore the first one by pushing gradient down to distinguish subtle difference in the input. This may explain why the same neural networks can fit random-labeled data, and generalize well for real data~\citep{zhang2016understanding}.

\textbf{Gradient Descent: Stochastic or not?} Previous works~\citep{keskar2016large} show that empirically stochastic gradient decent (SGD) with small batch size tends to converge to ``flat'' minima and offers better generalizable solution than those uses larger batches to compute the gradient.

From our framework, SGD update with small batch size is equivalent to using a perturbed/noisy version of $\pr(z_\alpha, z_\beta)$ at each iteration. Such an approach naturally reduces aforementioned over-fitting issues, which is due to hyper-sensitivity of data distribution and makes the final weight solution invariant to changes in $\pr(z_\alpha, z_\beta)$, yielding a ``flat'' solution.

\section{Conclusion and future work}
In this paper, we propose a novel theoretical framework for deep (multi-layered) nonlinear network with ReLU activation and local receptive fields. The framework utilizes the specific structure of neural networks, and formulates input data distributions explicitly. Compared to modeling deep models as non-convex problems, our framework reveals more structures of the network; compared to recent works that also take data distribution into considerations, our theoretical framework can model deep networks without imposing idealistic analytic distribution of data like Gaussian inputs or independent activations. Besides, we also analyze regularization techniques like Batch Norm, depicts its underlying geometrical intuition, and shows that BN is compatible with our framework.

Using this novel framework, we have made an initial attempt to analyze many important and practical issues in deep models, and provides a novel perspective on overfitting, generalization, disentangled representation, etc. We emphasize that in this work, we barely touch the surface of these core issues in deep learning. As a future work, we aim to explore them in a deeper and more thorough manner, by using the powerful theoretical framework proposed in this paper.

\bibliography{references}
\bibliographystyle{iclr2019_conference}

\clearpage

\def\proj{\mathrm{proj}}
\def\norm#1{\|#1\|}
\def\jbn#1{{J^{BN}(#1)}}
\def\s#1{S(#1)}

\def\hvx{\hat \vx}
\def\tvx{\tilde \vx}
\def\tvone{\tilde \vone}
\def\g#1{g_{#1}}
\def\p#1#2{\frac{\partial #1}{\partial #2}}

\section{Appendix}
\setcounter{theorem}{0}
\setcounter{definition}{0}
\setcounter{lemma}{0}

\begin{figure}[ht!]
    \centering
    \includegraphics[width=0.3\textwidth]{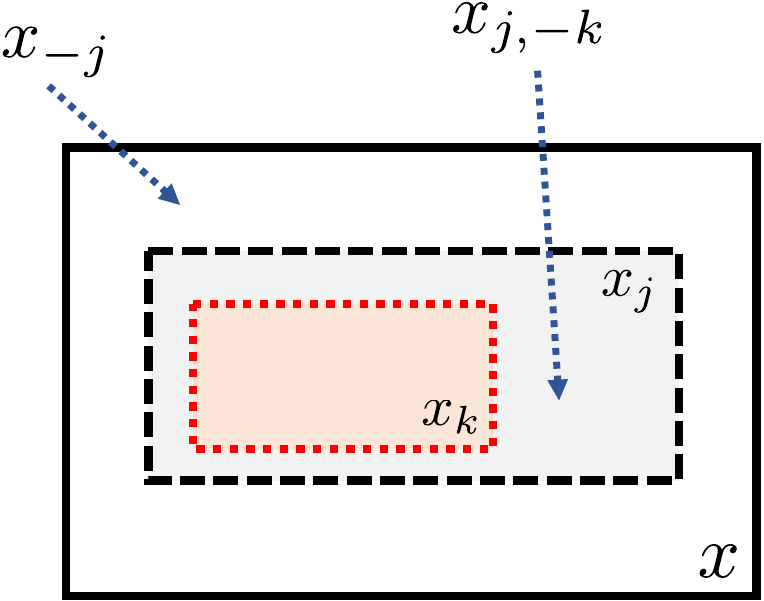}
    \caption{Notation used in Thm.~\ref{thm:hierarchical-conditioning}.}
\end{figure}

\subsection{Hierarchical Conditioning}
\begin{theorem}[Recursive Property of marginalized gradient]
\label{thm:hierarchical-conditioning}
\begin{equation}
    g_j(\reg{k}) = \ee2{\reg{j, -k} | \reg{k}}{g_j(\reg{j})} \label{eq:recursive}
\end{equation}
\end{theorem}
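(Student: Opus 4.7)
The claim is essentially the tower property of conditional expectation, specialized to the nested receptive-field structure, so the plan is to unwrap the definitions and apply iterated expectation once.

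First I would record the set-theoretic relationships that follow from locality: since $k\in\ch(j)$, the paper explicitly states that the parent's receptive field covers the child's, so $\reg{k}\subseteq\reg{j}\subseteq\fullreg$. This gives the disjoint decomposition $\reg{-k}=\reg{j,-k}\,\sqcup\,\reg{-j}$, where $\reg{j,-k}:=\reg{j}\setminus\reg{k}$ is the part of the parent's receptive field outside the child's, and $\reg{-j}=\fullreg\setminus\reg{j}$ is everything outside the parent's receptive field. This decomposition is the only structural fact needed; everything else is probability.

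Next I would rewrite the definition
\begin{equation}
g_j(\reg{k})=\ee2{\reg{-k}\mid \reg{k}}{g_j(\fullreg)}
\end{equation}
and split the conditional expectation across the two blocks of $\reg{-k}$ using the tower property. Conditioning first on $(\reg{k},\reg{j,-k})$, which together equal $\reg{j}$, and then on $\reg{k}$ alone, gives
\begin{equation}
g_j(\reg{k})=\ee2{\reg{j,-k}\mid \reg{k}}{\,\ee2{\reg{-j}\mid \reg{j}}{g_j(\fullreg)}\,}.
\end{equation}
Here I am using that conditioning on $\reg{j}$ subsumes conditioning on $\reg{k}$ because $\reg{k}\subseteq\reg{j}$, so the inner conditioning set can be simplified to just $\reg{j}$. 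I would then recognize the inner expectation as the definition of $g_j(\reg{j})=\ee2{\reg{-j}\mid \reg{j}}{g_j(\fullreg)}$, which is exactly the marginalized gradient at node $j$ used in Eqn.~\ref{eq:x-weight-update}. Substituting yields the recursive identity in Eqn.~\ref{eq:recursive}.

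The main potential obstacle is purely notational rather than mathematical: one must be careful that the probability measure being used to marginalize is consistent across the two nested conditionings, i.e.\ that $\pr(\reg{-j}\mid \reg{j},\reg{k})=\pr(\reg{-j}\mid \reg{j})$. This however is automatic from $\reg{k}\subseteq\reg{j}$, since $\reg{k}$ is then a deterministic function of $\reg{j}$ and contributes no extra information in the conditioning. Once this is pointed out, the proof reduces to a single application of the law of iterated expectation, and no independence or distributional assumptions on $\fullreg$ are required.
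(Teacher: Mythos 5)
Your proposal is correct and follows essentially the same route as the paper's own proof: decompose $\reg{-k}$ into $\reg{j,-k}$ and $\reg{-j}$, apply the law of iterated expectation, use $\reg{k}\subseteq\reg{j}$ to replace conditioning on $(\reg{j,-k},\reg{k})$ by conditioning on $\reg{j}$, and identify the inner expectation with $g_j(\reg{j})$. No gaps; the only cosmetic difference is that the paper writes the two nested expectations in the opposite order and then swaps them, which your one-step tower-property argument avoids.
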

\begin{proof}
We have:
\begin{align*}
    g_j(\reg{k}) &= \ee2{\reg{-k} | \reg{k}}{g_j(x)} \\ 
    &= \ee2{\reg{-j}, \reg{j,-k}| \reg{k}}{g_j(x)} \\
    &= \ee2{\reg{-j}|\reg{j, -k}, \reg{k}}{\ee2{\reg{j, -k} | \reg{k}}{g_j(x)}} \\
    &= \ee2{\reg{-j}|\reg{j}}{\ee2{\reg{j, -k} | \reg{k}}{g_j(x)}} \\
    &= \ee2{\reg{j, -k} | \reg{k}}{\ee2{\reg{-j}|\reg{j}}{g_j(x)}} \\
    &= \ee2{\reg{j, -k} | \reg{k}}{g_j(\reg{j})}
\end{align*}
\end{proof}

\subsection{Network theorem}
\begin{theorem}[Reformulation]
\label{thm:coarse-model}
Denote $\alpha = \rf{j}$ and $\beta=\rf{k}$. $k$ is a child of $j$. If the following two conditions hold:
\begin{itemize}
    \item \textbf{Focus of knowledge}. $\pr(x_k|z_\alpha, z_\beta) = \pr(x_k|z_\beta)$. 
    \item \textbf{Broadness of knowledge}. $\pr(x_j|z_\alpha, z_\beta) = \pr(x_j|z_\alpha)$. 
    \item \textbf{Decorrelation}. Given $z_\beta$, ($g_k^{\mathrm{raw}}(\cdot)$ and $f'_k(\cdot)$) and ($f_k^{\mathrm{raw}}(\cdot)$ and $f'_k(\cdot)$) are \emph{uncorrelated}
\end{itemize}
Then the following two conditions holds:
\begin{subequations}
\label{eq:induction-appendix}
\begin{align}
f_j(z_\alpha) &= f_j'(z_\alpha)\sum_{k\in\ch(j)}w_{jk}\ee2{z_\beta|z_\alpha}{f_k(z_\beta)} \label{eq:induction-appendix-forward}\\
g_k(z_\beta) &= f_k'(z_\beta)\sum_{j\in\pa(k)}w_{jk}\ee2{z_\alpha|z_\beta}{g_j(z_\alpha)} \label{eq:induction-appendix-backward}
\end{align}
\end{subequations}

\end{theorem}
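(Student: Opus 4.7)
The plan is to derive the two equations in~\eqref{eq:induction-appendix} independently, each following the same three-step template: (i) substitute the raw forward or backward recursion from~\eqref{eq:x-update}; (ii) use the decorrelation hypothesis to factor the gating derivative $f'_\cdot$ out of the conditional expectation; and (iii) use focus or broadness of knowledge together with the tower property to convert an expectation over $X_k$ or $X_j$ into an expectation over the neighbouring summary variable. No inequality or approximation should enter, so the whole proof reduces to a clean sequence of linearity-of-expectation and tower-law manipulations.

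For the forward equation~\eqref{eq:induction-appendix-forward}, I would start from $f_j(z_\alpha) = \ee2{X_j|z_\alpha}{f'_j(X_j)\,f_j^{\raw}(X_j)}$ with $f_j^{\raw}(X_j) = \sum_{k\in\ch(j)} w_{jk} f_k(X_k)$. Decorrelation, instantiated at node $j$ with its own summary $z_\alpha$, factorises this into $f'_j(z_\alpha)\sum_k w_{jk}\,\ee2{X_j|z_\alpha}{f_k(X_k)}$. Since $f_k(X_k)$ depends only on $X_k\subset X_j$, the inner expectation collapses to $\ee2{X_k|z_\alpha}{f_k(X_k)}$. I then insert $z_\beta$ via the tower law, writing this as $\ee2{z_\beta|z_\alpha}{\ee2{X_k|z_\alpha, z_\beta}{f_k(X_k)}}$, and invoke focus of knowledge, $\pr(x_k|z_\alpha,z_\beta)=\pr(x_k|z_\beta)$, to drop $z_\alpha$ from the inner conditioning. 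The inner expectation becomes $f_k(z_\beta)$, yielding the target $\ee2{z_\beta|z_\alpha}{f_k(z_\beta)}$.

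For the backward equation~\eqref{eq:induction-appendix-backward}, the derivation is symmetric but routed through~\eqref{eq:grad-collect} and Thm.~\ref{thm:hierarchical-conditioning}. Starting from $g_k(z_\beta)=\ee2{X_k|z_\beta}{f'_k(X_k)\,g_k^{\raw}(X_k)}$ with $g_k^{\raw}(X_k)=\sum_{j\in\pa(k)}w_{jk}\,g_j(X_k)$, decorrelation (at $k$ given $z_\beta$, exactly as stated) pulls out $f'_k(z_\beta)$, leaving $\sum_j w_{jk}\,\ee2{X_k|z_\beta}{g_j(X_k)}$. By Thm.~\ref{thm:hierarchical-conditioning}, $g_j(X_k)=\ee2{X_{j,-k}|X_k}{g_j(X_j)}$; since $z_\beta$ is a function of $X_k$, the iterated expectation collapses by the tower law to $\ee2{X_j|z_\beta}{g_j(X_j)}$. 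I then insert $z_\alpha$ in the outer layer via the tower law once more and apply broadness of knowledge, $\pr(x_j|z_\alpha,z_\beta)=\pr(x_j|z_\alpha)$, to drop $z_\beta$ from the inner conditioning, recovering $\ee2{z_\alpha|z_\beta}{g_j(z_\alpha)}$.

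The main subtlety is bookkeeping: which hypothesis serves which direction. Focus of knowledge is precisely the tool that collapses a child-level conditioning $(z_\alpha,z_\beta)\mapsto z_\beta$ in the forward step, while broadness of knowledge plays the mirror role $(z_\alpha,z_\beta)\mapsto z_\alpha$ for the parent-level quantity in the backward step. The decorrelation clause as written mentions only node $k$ with its own summary $z_\beta$, but the forward step silently requires the same statement at node $j$ with $z_\alpha$; I would read the hypothesis as holding uniformly at every node with respect to its own receptive-field summary, which is the only reading under which the recursion closes consistently across layers. Everything beyond this bookkeeping is routine.
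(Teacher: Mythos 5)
Your proposal is correct and follows essentially the same route as the paper's proof: focus of knowledge collapses the child-level conditioning in the forward pass, broadness of knowledge does the mirror job in the backward pass, and decorrelation separates the gating term — your tower-law phrasing and your use of Thm.~1 plus Eqn.~\ref{eq:grad-collect} for the backward marginalization are just a cosmetic reordering of the paper's kernel decompositions $\pr(x_k|z_\alpha)=\sum_{z_\beta}\pr(x_k|z_\beta)\pr(z_\beta|z_\alpha)$ and $\pr(x|z_\beta)=\pr(x_j|z_\beta)\pr(x_{-j}|x_j)$. Your reading of the decorrelation clause as holding at each node with respect to its own summary (so that it applies to $(f_j^{\raw},f_j')$ given $z_\alpha$ in the forward step) is exactly the paper's intent, which its proof invokes with the terse remark ``Similarly for $f_j(z_\alpha)$.''
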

\begin{proof}
For Eqn.~\ref{eq:induction-appendix-forward}, we have: 
\begin{align}
    f_j^\raw(z_\alpha) &= \int f_j^\raw(x)\pr(x|z_\alpha) \dd x \\
    &= \int f_j^\raw(x_j)\pr(x_j|z_\alpha) \dd x_j \\
    &= \int \left(\sum_{k\in\ch(j)} w_{jk} f_k(x_k)\right) \pr(x_j|z_\alpha) \dd x_j \label{eq:forward2}
\end{align}
And for each of the entry, we have:
\begin{align}
    \int f_k(x_k)\pr(x_j|z_\alpha) \dd x_j &= \int f_k(x_k)\pr(x_k|z_\alpha) \dd x_k \label{eq:forward1}
\end{align}
For $\pr(x_k|z_\alpha)$, using focus of knowledge, we have:
\begin{align}
    \pr(x_k|z_\alpha) &= \sum_{z_\beta} \pr(x_k, z_\beta | z_\alpha) \\
    &= \sum_{z_\beta} \pr(x_k| z_\beta ,z_\alpha)\pr(z_\beta|z_\alpha) \\
    &= \sum_{z_\beta} \pr(x_k| z_\beta)\pr(z_\beta|z_\alpha)
\end{align}
Therefore, following Eqn.~\ref{eq:forward1}, we have:
\begin{align}
    \int f_k(x_k)\pr(x_k|z_\alpha) \dd x_k &= 
    \int f_k(x_k) \sum_{z_\beta} \pr(x_k| z_\beta)\pr(z_\beta|z_\alpha) \dd x_k \\
    &=
    \sum_{z_\beta} \left(\int f_k(x_k) \pr(x_k| z_\beta) \dd x_k\right)\pr(z_\beta|z_\alpha) \\
    &=
    \sum_{z_\beta} f_k(z_\beta) \pr(z_\beta|z_\alpha) \\
    &= \ee2{z_\beta|z_\alpha}{f_k(z_\beta)}
\end{align}
Putting it back to Eqn.~\ref{eq:forward2} and we have:
\begin{equation}
    f_j^\raw(z_\alpha) = \sum_{k\in\ch(j)} w_{jk}\ee2{z_\beta|z_\alpha}{f_k(z_\beta)} 
\end{equation}

For Eqn.~\ref{eq:induction-appendix-backward}, similarly we have:
\begin{align}
    g^{\mathrm{raw}}_k(z_\beta) &= \int g_k^\raw(x)\pr(x|z_\beta)\dd x \\
    &= \int \sum_{j\in\pa(k)} w_{jk} g_j(x)\pr(x|z_\beta) \dd x \label{eq:backward2}
\end{align}
Notice that we have:
\begin{equation}
    \pr(x|z_\beta) = \pr(x_j|z_\beta)\pr(x_{-j}|x_j,z_\beta) = \pr(x_j|z_\beta)\pr(x_{-j}|x_j)
\end{equation}
since $x_j$ covers $x_k$ which determines $z_\beta$. Therefore, for each item we have: 
\begin{align}
    \int g_j(x)\pr(x|z_\beta) \dd x &= \int g_j(x)\pr(x_j|z_\beta)\pr(x_{-j}|x_j) \dd x \\
    &=\int \left(\int g_j(x)\pr(x_{-j}|x_j)\dd x_{-j}\right)\pr(x_j|z_\beta)\dd x_j \\
    &= \int g_j(x_j)\pr(x_j|z_\beta)\dd x_j \label{eq:backward1}
\end{align}

Then we use the broadness of knowledge:
\begin{align}
    \pr(x_j|z_\beta) &= \sum_{z_\alpha}\pr(x_j, z_\alpha|z_\beta) \\
    &= \sum_{z_\alpha}\pr(x_j |z_\alpha, z_\beta)\pr(z_\alpha|z_\beta) \\
    &= \sum_{z_\alpha}\pr(x_j |z_\alpha)\pr(z_\alpha|z_\beta)
\end{align}

Following Eqn.~\ref{eq:backward1}, we now have:
\begin{align}
    \int g_j(x)\pr(x|z_\beta) \dd x &= 
    \int g_j(x_j) \sum_{z_\alpha}\pr(x_j |z_\alpha)\pr(z_\alpha|z_\beta) \dd x_j \\
    &=\sum_{z_\alpha} \left(\int g_j(x_j) \pr(x_j |z_\alpha) \dd x_j \right) \pr(z_\alpha|z_\beta) \\
    &=\sum_{z_\alpha} g_j(z_\alpha) \pr(z_\alpha|z_\beta) \\
    &= \ee2{z_\alpha|z_\beta}{g_j(z_\alpha)}
\end{align}

Putting it back to Eqn.~\ref{eq:backward2} and we have:
\begin{equation}
    g^{\mathrm{raw}}_k(z_\beta) = \sum_{j\in\pa(k)} w_{jk} \ee2{z_\alpha|z_\beta}{g_j(z_\alpha)}
\end{equation}

Using Eqn.~\ref{eq:grad-collect}: 
\begin{align}
g_k(z_\beta) &= \int g_k(x_k)\pr(x_k|z_\beta)\dd x_k \\ &= \int f_k'(x_k)g^{\mathrm{raw}}_k(x_k) \pr(x_k|z_\beta) \dd x_k \\
&= \ee2{X_k|z_\beta}{f'_k(X_k)g_k^{\mathrm{raw}}(X_k)}
\end{align}
The un-correlation between $g_k^{\mathrm{raw}}(\cdot)$ and $f'_k(\cdot)$ means that 
\begin{align}
\ee2{X_k|z_\beta}{f'_k g_k^{\mathrm{raw}}} &=  \ee2{X_k|z_\beta}{f'_k}\cdot\ee2{X_k|z_\beta}{g_k^{\mathrm{raw}}}
\end{align}
Similarly for $f_j(z_\alpha)$.
\end{proof}

\subsection{Exactness of reformulation}
\begin{theorem}
If $\pr(x_j|z_\alpha)$ is a delta function for all $\alpha$, then the conditions of Thm.~\ref{thm:coarse-model} hold and the reformulation becomes exact.
\end{theorem}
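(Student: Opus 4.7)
The plan is to first unpack what the delta assumption gives at each region, verify the three conditions of Thm.~\ref{thm:coarse-model} one by one, and then note that the reformulation actually holds with equality rather than only as an identity between averaged quantities. Since $x_k = \reg{\beta} \subseteq x_j = \reg{\alpha}$, the hypothesis that $\pr(\reg{\gamma}|z_\gamma)$ is a delta for every region $\gamma$ specializes to $\pr(\reg{k}|z_\beta) = \delta(\reg{k} - x_k^*(z_\beta))$ at the child and $\pr(\reg{j}|z_\alpha) = \delta(\reg{j} - x_j^*(z_\alpha))$ at the parent; by restriction the parent delta also forces $\pr(\reg{k}|z_\alpha)$ to be a delta at the $x_k$-component of $x_j^*(z_\alpha)$. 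Throughout I would exploit the deterministic dependence $z_\beta = z_\beta(\reg{\beta})$ to collapse joint distributions involving $z_\beta$.

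For Focus, I would write $\pr(\reg{k}|z_\alpha, z_\beta) = \pr(\reg{k}, z_\beta|z_\alpha)/\pr(z_\beta|z_\alpha)$ and use $\pr(\reg{k}, z_\beta|z_\alpha) = \pr(\reg{k}|z_\alpha)\ii{z_\beta = z_\beta(\reg{k})}$. Plugging in the delta, this is a point mass at $x_k^*(z_\alpha)$ on the support where $z_\beta$ is consistent with $z_\alpha$; the analogous calculation conditioning only on $z_\beta$ gives $\pr(\reg{k}|z_\beta) = \delta(\reg{k} - x_k^*(z_\beta))$, and consistency forces $x_k^*(z_\alpha) = x_k^*(z_\beta)$ so the two agree. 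Broadness is structurally identical at the parent scale: decompose $\pr(\reg{j}|z_\alpha, z_\beta)$ the same way, note that $z_\beta$ is determined by $\reg{\beta} \subseteq \reg{j}$, and conclude that both sides are delta at $x_j^*(z_\alpha)$.

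Decorrelation follows immediately, because $\pr(\reg{k}|z_\beta)$ is a point mass so the conditional law of $X_k$ given $z_\beta$ is deterministic; any function $h(X_k)$ then has zero conditional variance, so any two such functions are uncorrelated, which covers the pairs $(g_k^\raw, f'_k)$ and $(f_k^\raw, f'_k)$. For exactness, I would observe that every marginalized quantity reduces to a point evaluation, e.g.\ $f_j(z_\alpha) = f_j(x_j^*(z_\alpha))$, $g_j(z_\alpha) = g_j(x_j^*(z_\alpha))$, $f'_j(z_\alpha) = f'_j(x_j^*(z_\alpha))$, and similarly $\ee2{z_\beta|z_\alpha}{f_k(z_\beta)} = f_k(x_k^*(z_\alpha))$; substituting these into Eqn.~\ref{eq:induction-appendix} then recovers Eqn.~\ref{eq:x-update} at $x = x_\alpha^*(z_\alpha)$, so the reformulation is exact rather than averaged. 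The only mildly delicate step I anticipate is making the conditioning on $(z_\alpha, z_\beta)$-pairs of zero joint probability rigorous in Focus and Broadness, but those pairs never enter any expectation used in Thm.~\ref{thm:coarse-model}, so this is a bookkeeping matter rather than a genuine obstacle.
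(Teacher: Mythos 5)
Your proposal is correct and follows essentially the same route as the paper: verify Focus and Broadness from the fact that the delta hypothesis makes $x_j$ (hence $z_\beta$) deterministic given $z_\alpha$ and $x_k$ deterministic given $z_\beta$, and get Decorrelation because expectations against a point mass factorize for any pair of functions. You are somewhat more explicit than the paper about the zero-probability $(z_\alpha,z_\beta)$ bookkeeping and about why exactness follows (all marginalized quantities reduce to point evaluations $f_j(z_\alpha)=f_j(\phi_j(z_\alpha))$, etc.), but these are elaborations of the same argument rather than a different approach.
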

\begin{proof}
The fact that $\pr(x_j|z_\alpha)$ is a delta function means that there exists a function $\phi_j$ so that:
\begin{equation}
    \pr(x_j|z_\alpha) = \delta(x_j - \phi_j(z_\alpha))
\end{equation}
That is, $z_\alpha$ contains all information of $x_j$ (or $x_\alpha$). Therefore, 
\begin{itemize}
    \item \textbf{Broadness of knowledge.} $z_\alpha$ contains strictly more information than $z_\beta$ for $\beta\in \ch(\alpha)$, therefore $\pr(x_j|z_\alpha, z_\beta) = \pr(x_j|z_\alpha)$. 
    \item\textbf{Focus of knowledge.}
    $z_\beta$ captures all information of $z_k$, so $\pr(x_k|z_\alpha, z_\beta) = \pr(x_k|z_\beta)$.
    \item\textbf{Decorrelation.} For any $h_1(x_j)$ and $h_2(x_j)$ we have 
    \begin{align}
        \ee2{X_j|z_\alpha}{h_1h_2} &= \int  h_1(x_j)h_2(x_j)\pr(x_j|z_\alpha)\dd x_j \\
        &= \int h_1(x_j)h_2(x_j)\delta(x_j - \phi_j(z_\alpha))\dd x_j \\
        &=
        h_1(\phi_j(z_\alpha))h_2(\phi_j(z_\alpha)) \\
        &= \int h_1(x_j)\pr(x_j|z_\alpha)\dd x_j \int h_2(x_j)\pr(x_j|z_\alpha)\dd x_j \\ 
        &=
        \ee2{X_j|z_\alpha}{h_1}\ee2{X_j|z_\alpha}{h_2}
    \end{align}
\end{itemize}
\end{proof}

\subsection{Matrix form}
\begin{theorem}[Matrix Representation of Reformulation]
\begin{equation}
    F_\alpha\! =\!D_{\alpha} \!\circ\! \sum_{\beta\in \ch(\alpha)} P_{\alpha\beta} F_\beta W_{\beta\alpha} ,\quad
    \tilde G_\beta\! =\! D_\beta \!\circ\! \sum_{\alpha\in\pa(\beta)} P_{\alpha\beta}^T \tilde G_\alpha W_{\beta\alpha}^T  ,\quad \Delta W_{\beta\alpha}\! =\! (P_{\alpha\beta} F_\beta)^T \tilde G_\alpha
    \label{eqn:matrix-form-simplified}  
\end{equation}
\end{theorem}
\begin{proof}
We first consider one certain group $\alpha$ and $\beta$, which uses $x_\alpha $ and $x_\beta$ as the receptive field. For this pair, we can write Eqn.~\ref{eq:induction-appendix} in the following matrix form:
\begin{subequations}
\label{eqn:matrix-form-appendix}
\begin{align}
    F^{\raw}_\alpha &= P_{\alpha\beta} F_\beta W_{\beta\alpha} \\
    F_{\alpha} &= F^{\raw}_\alpha \circ D_{\alpha} \\
    G^\raw_\beta &= \left(P^b_{\alpha\beta}\right)^T G_{\beta} W_{\beta\alpha}^T \\    
    G_\beta &= G^\raw_\beta \circ D_{\beta}
\end{align}
\end{subequations}

\begin{table}
\centering
\begin{tabular}{|l||l|l|}
\hline
 & Dimension & Description \\
\hline\hline
$F_\alpha$, $D_\alpha$ & $m_\alpha$-by-$n_\alpha$ & Activation $f_j(z_\alpha)$ and gating prob $f'_j(z_\alpha)$ in group $\alpha$. \\
\hline
$G_\alpha$, $\tilde G_\alpha$ & $m_\alpha$-by-$n_\alpha$ & Gradient $g_j(z_\alpha)$ and unnormalized gradient $\tilde g_j(z_\alpha)$ in group $\alpha$. \\
\hline
$W_{\beta\alpha}$ & $n_\beta$-by-$n_\alpha$ & Weight matrix that links group $\beta$ and $\alpha$. \\
\hline
$P_{\alpha\beta}$, $P^b_{\alpha\beta}$ & $m_\alpha$-by-$m_\beta$ & Prob $\pr(z_\beta|z_\alpha)$, $\pr(z_\alpha|z_\beta)$ of events between group $\beta$ and $\alpha$.\\
\hline
$\Lambda_\alpha$ & $m_\alpha$-by-$m_\alpha$ & Diagonal matrix encoding prior prob $\pr(z_\alpha)$. \\
\hline
\end{tabular}
\caption{Matrix Notation. See Eqn.~\ref{eqn:matrix-form-simplified}.}
\label{tbl:matrix-notation}
\end{table}

Using $\Lambda_\beta (P^b_{\alpha\beta})^T = P_{\alpha\beta}^T\Lambda_\alpha$ and $\tilde G_\alpha = \Lambda_\alpha G_\alpha$, we could simplify Eqn.~\ref{eqn:matrix-form-appendix} as follows: 
\begin{subequations}
\label{eqn:matrix-form-simplified-pair-appendix}
\begin{align}
    F_\alpha &= P_{\alpha\beta} F_\beta W_{\beta\alpha} \circ D_{\alpha} \\
    \tilde G_\beta &= \left(P_{\alpha\beta}\right)^T \tilde G_{\beta} W_{\beta\alpha}^T \circ D_{\beta} 
\end{align}
\end{subequations}

Therefore, using the fact that $\sum_{j\in\pa(k)} = \sum_{\alpha\in\pa(\beta)} \sum_{j\in\alpha}$ (where $\beta = \rf{k}$) and $\sum_{k\in\ch(j)} = \sum_{\beta\in\ch(\alpha)} \sum_{k\in\beta}$ (where $\alpha = \rf{j}$), and group all nodes that share the receptive field together, we have:

\begin{subequations}
\label{eqn:matrix-form-simplified-appendix}
\begin{align}
    F_\alpha &= D_{\alpha} \circ \sum_{\beta\in \ch(\alpha)} P_{\alpha\beta} F_\beta W_{\beta\alpha} \\
    \tilde G_\beta &= D_\beta \circ \sum_{\alpha\in\pa(\beta)} P_{\alpha\beta}^T \tilde G_\alpha W_{\beta\alpha}^T
\end{align}
\end{subequations}
For the gradient update rule, from Eqn.~\ref{eq:x-weight-update} notice that:
\begin{align}
    \Delta w_{jk} &= \ee2{x}{f_k(x)g_j(x)} \\
    &= \int f_k(x)g_j(x)\pr(x)\dd x \\
    &= \int f_k(x)g_j(x)\sum_{z_\alpha} \pr(x|z_\alpha)\pr(z_\alpha)\dd x \\
    &= \sum_{z_\alpha} \int f_k(x)g_j(x) \pr(x|z_\alpha)\pr(z_\alpha)\dd x
\end{align}
We assume decorrelation so we have:
\begin{align}
    \Delta w_{jk} &= \sum_{z_\alpha} \ee2{X|z_\alpha}{f_k(x)} g_j(z_\alpha)\pr(z_\alpha)\\
    &=\sum_{z_\alpha} \ee2{X_k|z_\alpha}{f_k(x_k)} \tilde g_j(z_\alpha)
\end{align}
For $\ee2{X_k|z_\alpha}{f_k(x_k)}$, again we use focus of knowledge: 
\begin{align}
    \ee2{X_k|z_\alpha}{f_k(x_k)} &= \int f_k(x_k)\pr(x_k|z_\alpha)\dd x_k \\
    &= \sum_{z_\beta} \int f_k(x_k) \pr(x_k|z_\alpha, z_\beta)\pr(z_\beta|z_\alpha)\dd x_k \\
    &= \sum_{z_\beta} \int f_k(x_k) \pr(x_k|z_\beta)\pr(z_\beta|z_\alpha)\dd x_k \\
    &= \sum_{z_\beta}f_k(z_\beta) \pr(z_\beta|z_\alpha) 
\end{align}

Put them together and we have:
\begin{equation}
    \Delta w_{jk} = \sum_{z_\alpha}\sum_{z_\beta}f_k(z_\beta) \tilde g_j(z_\alpha)\pr(z_\beta|z_\alpha) = \ee2{z_\alpha, z_\beta}{f_k(z_\beta)g_j(z_\alpha)}
\end{equation}

Write it in concise matrix form and we get:
\begin{equation}
    \Delta W_{\beta\alpha} = (P_{\alpha\beta} F_\beta)^T \tilde G_\alpha
\end{equation}
\end{proof}

\subsection{Batch Norm as a projection}
\begin{theorem}[Backpropagation of Batch Norm]
For a top-down gradient $\vg$, BN layer gives the following gradient update ($P^\perp_{\bnvinput, \vone}$ is the orthogonal complementary projection of subspace $\{\bnvinput, \vone\}$):
\begin{equation}
    \vg_\vf = J^{BN}(\bnvinput)\vg = \frac{c_1}{\sigma}P^\perp_{\bnvinput, \vone}\vg, \quad \vg_\vc = S(\bnvinput{})^T \vg
    \label{eq:batch-norm-projection}
\end{equation}
\end{theorem}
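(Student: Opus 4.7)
The plan is to compute the Jacobian of each of the three BN sublayers separately and then show that their composition simplifies to the claimed orthogonal-complement projection, using crucially that the centered activation $\bnvzeromean$ is orthogonal to $\vone$.

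First I would handle the zero-mean sublayer $\bnvzeromean = \bnvinput - \mu\vone = (I - \tfrac{1}{N}\vone\vone^T)\bnvinput$. Its Jacobian is the symmetric projector $P^\perp_{\vone} := I - \tfrac{1}{N}\vone\vone^T$ onto the orthogonal complement of $\vone$. Next, for the unit-variance sublayer $\bnvstandard = \bnvzeromean/\sigma$ with $\sigma^2 = \tfrac{1}{N}\|\bnvzeromean\|^2$, I would differentiate carefully: using $\partial\sigma/\partial\bnvzeromean = \bnvzeromean/(N\sigma)$ and the quotient rule, a short computation gives
\begin{equation}
\frac{\partial \bnvstandard}{\partial \bnvzeromean} = \frac{1}{\sigma}\left(I - \frac{\bnvzeromean\bnvzeromean^T}{\|\bnvzeromean\|^2}\right) = \frac{1}{\sigma} P^\perp_{\bnvzeromean}.
\end{equation}
Finally the affine sublayer $\bnvoutput = c_1\bnvstandard + c_0\vone$ contributes a Jacobian $c_1 I$ with respect to $\bnvstandard$, so by the chain rule
\begin{equation}
J^{BN}(\bnvinput) = \frac{c_1}{\sigma}\, P^\perp_{\bnvzeromean}\, P^\perp_{\vone}.
\end{equation}

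The key algebraic step is to collapse this product of two rank-deficient projectors into the single orthogonal-complement projector onto $\mathrm{span}\{\bnvinput,\vone\}$. Here I would use that $\vone^T \bnvzeromean = 0$ by construction, so $\bnvzeromean \perp \vone$, and therefore the subspace $\mathrm{span}\{\bnvzeromean,\vone\}$ equals $\mathrm{span}\{\bnvinput,\vone\}$ (since $\bnvinput = \bnvzeromean + \mu\vone$). Because the two basis vectors $\bnvzeromean$ and $\vone$ are already orthogonal, $P^\perp_{\bnvzeromean} P^\perp_{\vone}$ is itself the orthogonal projector onto the orthogonal complement of their joint span; this is the main ``trick'' of the proof, and it is what makes the formula so clean despite BN being a seemingly awkward composition. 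Applying this to the top-down gradient $\vg$ then yields the claimed $\vg_\vf = \tfrac{c_1}{\sigma} P^\perp_{\bnvinput,\vone}\vg$.

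For the second formula, I would differentiate $\bnvoutput$ directly with respect to the affine parameters: $\partial \bnvoutput/\partial c_1 = \bnvstandard$ and $\partial\bnvoutput/\partial c_0 = \vone$, so stacking these gives $\vg_\vc = S(\bnvinput)^T \vg$ where the two columns of $S(\bnvinput)$ are $\bnvstandard$ and $\vone$ (matching the notation used in the statement).

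The main obstacle I anticipate is bookkeeping in the Jacobian of the unit-variance step, since $\sigma$ depends on all entries of $\bnvzeromean$ simultaneously; care is needed to get the right rank-one correction and the scaling by $1/\sigma$. Beyond that, the only nontrivial conceptual point is the orthogonality observation $\bnvzeromean \perp \vone$ that lets the two projectors fuse into one, and everything else is a direct chain-rule assembly.
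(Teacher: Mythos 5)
Your proposal is correct and follows essentially the same route as the paper's proof: decompose BN into the zero-mean, unit-variance, and affine sublayers, compute the Jacobian of the normalization step as $\frac{1}{\sigma}P^\perp_{\bnvzeromean}$, chain the pieces together, and use $\mathrm{span}\{\bnvzeromean,\vone\}=\mathrm{span}\{\bnvinput,\vone\}$ to identify the result with $\frac{c_1}{\sigma}P^\perp_{\bnvinput,\vone}$, with $\vg_\vc=S(\bnvinput)^T\vg$ read off from the affine parameters. The only difference is cosmetic: you spell out explicitly the fusion $P^\perp_{\bnvzeromean}P^\perp_{\vone}=P^\perp_{\bnvinput,\vone}$ via the orthogonality $\vone^T\bnvzeromean=0$, a step the paper asserts more tersely through the identity $P^\perp_{\bnvstandard,\vone}=I-\frac{1}{N}S(\bnvinput)S(\bnvinput)^T$.
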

\begin{proof}
We denote pre-batchnorm activations as $\bninput{i} = f_j(x_i)$ ($i = 1 \ldots N$). In Batch Norm, $\bninput{i}$ is whitened to be $\bnstandard{i}$, then linearly transformed to yield the output $\bnoutput{i}$:
\begin{equation}
    \bnzeromean{i} = \bninput{i} - \mu, \quad\bnstandard{i} = \bnzeromean{i} /\sigma, \quad\bnoutput{i} = c_1 \bnstandard{i} + c_0
\end{equation}
where $\mu = \frac{1}{N}\sum_i \bninput{i}$ and $\sigma^2 = \frac{1}{N}\sum_i (\bninput{i} - \mu)^2$ and $c_1$, $c_0$ are learnable parameters.

While in the original batch norm paper, the weight update rules are super complicated and unintuitive (listed here for a reference):
\begin{figure}[ht!]
    \centering
    \includegraphics[width=0.7\textwidth]{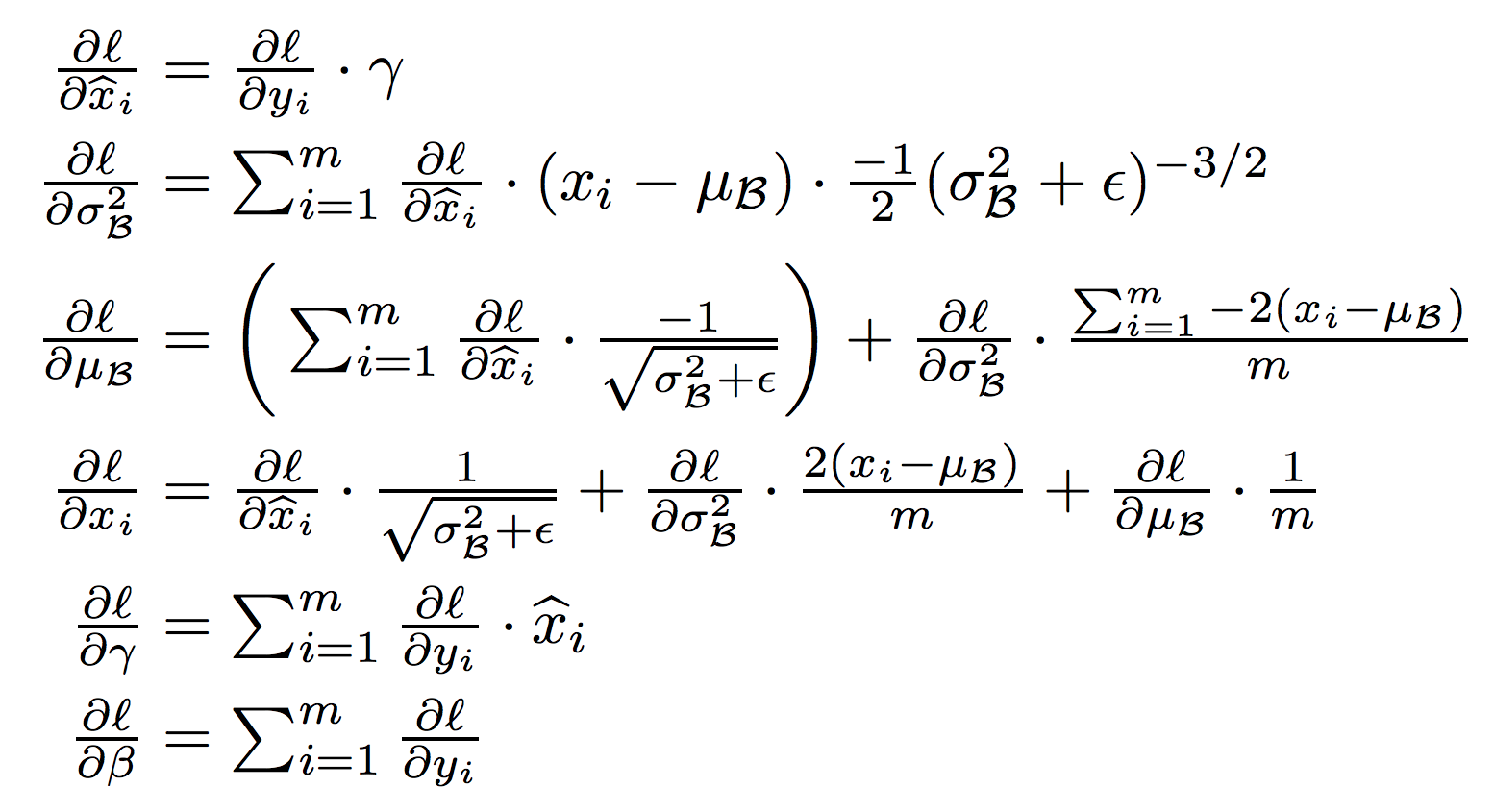}
    \caption{Original BN rule in~\citep{batchnorm}.}
\end{figure}

It turns out that with vector notation, the update equations have a compact vector form with clear geometric meaning. 

To achieve that, we first write down the vector form of forward pass of batch normalization:
\begin{equation}
    \bnvzeromean = P^\perp_1 \bnvinput, \quad \bnvstandard{} = \bnvzeromean / \|\bnvzeromean\|_{\mathrm{uni}}, \quad \bnvoutput = c_1 \bnvstandard + c_0\vone = S(\bnvinput)\vc
\end{equation}
where $\bnvinput$, $\bnvzeromean$, $\bnvstandard$ and $\bnvoutput$ are vectors of size $N$, $P^\perp_1 \equiv I - \frac{\vone\vone^T}{N}$ is the projection matrix that
centers the data, $\sigma = \|\vf\|_\mathrm{uni} = \frac{1}{\sqrt{N}}\|\vf\|_2$ and $\vc \equiv [c_1, c_0]^T$ are the parameters in Batch
Normalization and $S(\bnvinput)\equiv [\bnvstandard{}, \vone]$ is the standardized
data. Note that $S(\bnvinput{})^TS(\bnvinput{}) = N \cdot I_2$ ($I_2$ is $2$-by-$2$ identity matrix) and thus $S(\vx)$ is an column-orthogonal $N$-by-$2$ matrix. 
If we put everything together, then we have:
\begin{equation}
    BN(\vf) = c_1\frac{P^\perp_1 \vf}{\norm{P^\perp_1 \vf}_{\mathrm{uni}}} + c_0\vone
\end{equation}

\begin{figure}
    \centering
    \includegraphics[width=0.8\textwidth]{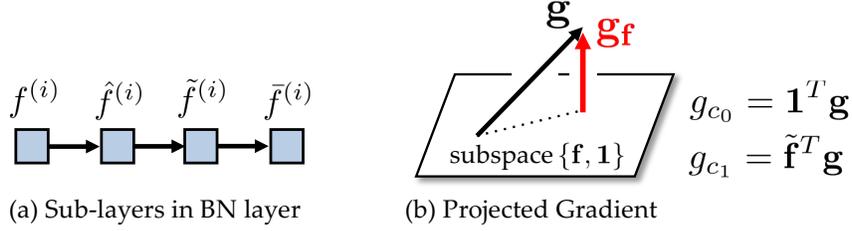}
    \caption{Analysis of Batch Normalization. \textbf{(a)} A Batch Normalization layer could be decomposed into three sublayers (zero-mean, unit-variance, affine). \textbf{(b)} The down
      gradient $\vg_\vf$ is a projection of input gradient $\vg$ onto the orthogonal complementary space spanned by $\{\vf, \vone\}$.}\label{fig:bn-appendix}
\end{figure}

Using this notation, we can compute the Jacobian of batch normalization layer. Specifically, for any vector $\vf$, we have:
\begin{equation}
\frac{\dd\left(\frac{\vf}{\norm{\vf}}\right)}{\dd\vf} = \frac{1}{\norm{\vf}}\left(I - \frac{\vf\vf^T}{\norm{\vf}^2}\right) = \frac{1}{\norm{\vf}}P^\perp_{\vf}
\end{equation}
where $P^\perp_{\vf}$ projects a vector into the \emph{orthogonal complementary} space of $\vf$. Therefore we have: 
\begin{equation}
    \jbn{\vf} = \frac{\dd\bnvoutput{}}{\dd\vf} = c_1\frac{\dd\bnvstandard{}}{\dd\vf} = 
    c_1\frac{\dd\bnvzeromean{}}{\dd\vf}\frac{\dd\bnvstandard{}}{\dd\bnvzeromean{}} = 
\frac{c_1}{\sigma}P^c_{\bnvstandard{}, \vone} \label{eqn:bn-jacobian}
\end{equation}
where $P^\perp_{\bnvstandard, \vone} = I - \frac{\s{\vf}\s{\vf}^T}{N}$ is a symmetric projection matrix that projects the input gradient to the orthogonal complement space spanned by $\tvx$ and $\vone$ (Fig.~\ref{fig:bn}(b)). Note that the space spanned by $\bnvstandard$ and $\vone$ is also the space spanned by $\vf$ and $\vone$, since $\bnvstandard = (\vf - \mu\vone) / \sigma$ can be represented linearly by $\vf$ and $\vone$. Therefore $P^\perp_{\bnvstandard, \vone} = P^\perp_{\vf, \vone}$. 

An interesting property is that since $BN(\vf)$ returns a vector in the subspace of $\vf$ and $\vone$, for the $N$-by-$N$ Jacobian matrix of Batch Normalization, we have:
\begin{equation}
    \jbn{\vf}BN(\vf) = \jbn{\vf}\vone = \jbn{\vf}\vf = \vzero
\end{equation} 
Following the backpropagation rule, we get the following gradient update for batch normalization. If $\vg = \partial L / \partial \bar\vf$ is the gradient from top, then
\begin{equation}
    \vg_{\vc} = \s{\vx}^T\vg, \quad\quad \vg_{\vf} = \jbn{\vf}\vg \label{eqn:bn-projective-gradient}
\end{equation}

Therefore, any gradient (vector of size $N$) that is back-propagated to the input of BN layer will be automatically orthogonal to that activation (which is also a vector of size $N$). 
\end{proof}

\subsubsection{Conserved Quantity in Batch Normalization for ReLU activation}
One can find an interesting quantity, by multiplying $g_j(x)$ on both side of the forward equation in Eqn.~\ref{eq:x-update} and taking expectation:
\begin{equation}
    \ee2{x}{g_jf_j} = \ee2{x}{\sum_{k\in \ch(j)} w_{jk} f_kg_j} = \sum_{k\in\ch(j)} w_{jk}\Delta w_{jk}
\end{equation}
Using the language of differential equation, we know that:
\begin{equation}
    \int_0^t \ee2{x}{g^{(t')}_jf^{(t')}_j} \dd t' = E_j(t) - E_j(0)
\end{equation}
where $E_j = \frac{1}{2}\sum_{k\in \ch(j)} w^2_{jk} = \frac{1}{2} \|W_{j\cdot}\|^2$. If we place Batch Normalization layer just after ReLU activation and linear layer, by BN property, since $\ee2{x}{g_jf_j} \equiv 0$ for all iterations, the row energy $E_j(t)$ of weight matrix $W$ of the linear layer is conserved over time. This might be part of the reason why BN helps stabilize the training. Otherwise energy might ``leak'' from one layer to nearby layers. 

\subsection{Nonlinear versus linear}
\begin{theorem}[Expressibility of ReLU Nonlinearity]
\label{thm:sufficient-node-appendix}
Assuming $m_\alpha = n_\alpha = \mathcal{O}(\exp(\mathrm{sz}(\alpha)))$, where $\sz(\alpha)$ is the size of receptive field of $\alpha$. If each $P_{\alpha\beta}$ is all-vert, then: ($\omega$ is top-level receptive field) 
\begin{equation}
    \min_W Loss_\mathrm{ReLU}(W) = 0, \quad \min_W Loss_\mathrm{Linear}(W) = \mathcal{O}(\exp(\sz(\omega)))
\end{equation}
Here we define $Loss(W) \equiv \|F_\omega - I\|^2_F$.
\end{theorem}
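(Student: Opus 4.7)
I would attack the two claims separately: the ReLU bound via an explicit bottom-up construction, and the linear bound via a rank-based lower bound together with a trivial matching upper bound.

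\textbf{ReLU, $\min_W \mathrm{Loss}_{\mathrm{ReLU}}(W) = 0$.} I will build $W$ layer by layer, maintaining the invariant $F_\alpha = I_{m_\alpha}$ at every region $\alpha$. The leaves give the base case by assumption. For the inductive step, fix $\alpha$ and assume $F_\beta = I$ for every $\beta \in \ch(\alpha)$. I would set $W_{\beta\alpha} = 0$ for all but one distinguished child $\beta_0$, so that the preactivation collapses to $P_{\alpha\beta_0} W_{\beta_0\alpha}$. Since $P_{\alpha\beta_0}$ is all-vert, every row $p_j$ is a vertex of the convex hull of the rows, so by the separating hyperplane theorem there is a direction $u_j$ with $p_j^\top u_j > p_i^\top u_j$ for every $i \neq j$. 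Exploiting $P_{\alpha\beta_0}\vone = \vone$, I shift $u_j$ by a multiple of $\vone$ to push the threshold to zero, yielding $w_j$ with $p_j^\top w_j > 0$ and $p_i^\top w_j < 0$ for $i \neq j$, and rescale so that $p_j^\top w_j = 1$. Assembling the $w_j$ as columns of $W_{\beta_0\alpha}$ gives a preactivation whose diagonal is $1$ and whose off-diagonal entries are strictly negative; the ReLU gating $D_\alpha$ then produces $F_\alpha = I_{m_\alpha}$, closing the induction. At the root, $F_\omega = I$, so the loss vanishes.

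\textbf{Linear, $\min_W \mathrm{Loss}_{\mathrm{Linear}}(W) = \mathcal{O}(\exp(\sz(\omega)))$.} Removing the gating, $F_\alpha = \sum_{\beta\in\ch(\alpha)} P_{\alpha\beta} F_\beta W_{\beta\alpha}$ is a sum of matrix products, so by subadditivity $\rank(F_\alpha) \le \sum_{\beta\in\ch(\alpha)} \rank(F_\beta)$. Unrolling this recursion down to the leaves, where $\rank(F_\beta) = m_\beta = \mathcal{O}(1)$ when leaves have constant receptive-field size, bounds $\rank(F_\omega)$ by a quantity polynomial in $\sz(\omega)$. Since $m_\omega = \Theta(\exp(\sz(\omega)))$, the Eckart--Young inequality gives $\|F_\omega - I_{m_\omega}\|_F^2 \ge m_\omega - \rank(F_\omega) = \Omega(\exp(\sz(\omega)))$. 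The matching upper bound is immediate: setting $W\equiv 0$ forces $F_\omega = 0$ and the loss equals $m_\omega = \mathcal{O}(\exp(\sz(\omega)))$, establishing the advertised order.

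\textbf{Main obstacle.} The subtle step is the ReLU construction: the separating hyperplane only provides strict separation of one vertex from the others, and turning this into strict positivity on the diagonal together with strict negativity off-diagonal depends crucially on the row-stochastic property $P\vone = \vone$ to realize the $\vone$-shift, and on the subsequent scaling to normalise the diagonal to $1$. A secondary concern is that the rank recursion in the linear case must be checked for general locally-connected DAGs rather than pure trees, but any double-counting only loosens the bound and still keeps $\rank(F_\omega)$ polynomial in $\sz(\omega)$, which is all that the $\Omega(\exp(\sz(\omega)))$ lower bound needs.
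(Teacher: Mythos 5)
Your proposal is correct and follows essentially the same route as the paper: the same induction maintaining $F_\alpha = I$ using the vertex-separation property of all-vert $P_{\alpha\beta}$ (your use of the row-stochastic shift $P\vone=\vone$ and a single distinguished child is a cosmetic variant of the paper's bias terms $b_i$ and the $1/|\ch(\alpha)|$ split over all children), and the same rank-subadditivity plus Eckart--Young--Mirsky argument for the linear lower bound. Your explicit matching upper bound via $W=0$ is a small but welcome addition that the paper leaves implicit.
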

\begin{proof}
We prove that in the case of nonlinearity, there exists a weight so that the activation $F_\alpha = I$ for all $\alpha$. We prove by induction. The base case is trivial since we already know that $F_\alpha = I$ for all leaf regions.

Suppose $F_\beta = I$ for any $\beta\in\ch(\alpha)$. Since $P_{\alpha\beta}$ is all-vert, every row is a vertex of the convex hull, which means that for $i$-th row $p_i$, there exists a weight $\vw_i$ and $b_i$ so that $\vw_i^T p_i + b_i = 1/|\ch(\alpha)| > 0$ and $\vw_i^T p_j + b_i < 0$ for $j \neq i$. Put these weights and biases together into $W_{\beta\alpha}$ and we have
\begin{equation}
    F_\alpha^{\raw} = \sum_\beta P_{\alpha\beta} F_\beta W_{\beta\alpha} = \sum_\beta P_{\alpha\beta}W_{\beta\alpha}
\end{equation}
All diagonal elements of $F_\alpha^{\raw}$ are $1$ while all off-diagonal elements are negative. Therefore, after ReLU, $F_{\alpha} = I$. Applying induction, we get $F_{\omega} = I$ and $G_\omega = I - F_\omega = 0$. Therefore, $Loss_{\mathrm{ReLU}}(W) = \|G_\omega\|^2_F = 0$.

In the linear case, we know that $\rank(F_\alpha) \le \sum_\beta \rank(P_{\alpha\beta}F_\beta W_{\beta\alpha}) \le \sum_\beta \rank(F_\beta)$, which is on the order of the size $\sz(\alpha)$ of $\alpha$'s receptive field (Note that the constant relies on the overlap between receptive fields). However, at the top-level, $m_\omega=n_\omega = \mathcal{O}(\exp(\mathrm{sz}(\omega)))$, i.e., the information contained in $\alpha$ is exponential with respect to the size of the receptive field. By Eckart–Young–Mirsky theorem, we know that there is a lower bound for low-rank approximation. Therefore, the loss for linear network $Loss_\mathrm{linear}$ is at least on the order of $m_0$, i.e., $Loss_\mathrm{linear} = \mathcal{O}(m_\omega)$.
Note that this also works if we have BN layer in-between, since BN does a linear transform in the forward pass. 
\end{proof}

\begin{theorem}
The following two are correct:
\begin{itemize}
\item[(1)] If $F$ is row full rank, then $\vert(PF) = \vert(P)$. 
\item[(2)] $PF$ is all-vert iff $P$ is all-vert.
\end{itemize}
\end{theorem}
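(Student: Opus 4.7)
The plan is to unpack the vertex condition directly: a row $p_j$ of $P$ fails to be a vertex precisely when there exist coefficients $\lambda_i\ge 0$ with $\sum_{i\ne j}\lambda_i = 1$ such that $p_j = \sum_{i\ne j}\lambda_i p_i$. I aim to establish a row-by-row bijection between the non-vertex rows of $P$ and those of $PF$, from which both (1) and (2) fall out with no extra work.

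First I would handle the easy direction, which goes through for any $F$: if $p_j = \sum_{i\ne j}\lambda_i p_i$ is a convex combination, right-multiplying by $F$ gives $(PF)_j = \sum_{i\ne j}\lambda_i (PF)_i$, so the corresponding row of $PF$ is also a non-vertex. This already yields the inequality $\vert(PF) \le \vert(P)$ (and in particular ``$PF$ all-vert $\Rightarrow$ $P$ all-vert'') without any hypothesis on $F$.

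For the reverse implication I would invoke the full-row-rank hypothesis on $F$. Suppose $(PF)_j = \sum_{i\ne j}\lambda_i (PF)_i$ and set $v = p_j - \sum_{i\ne j}\lambda_i p_i$, so that $vF = 0$. Linear independence of the rows of $F$ means its left null space is trivial, forcing $v = 0$ and hence $p_j = \sum_{i\ne j}\lambda_i p_i$. Combined with the easy direction, this yields a bijection between the non-vertex rows of $P$ and those of $PF$, giving $\vert(PF) = \vert(P)$ and proving (1). For (2), the ``all-vert'' property is precisely the condition $\vert = m$ where $m$ is the common number of rows of $P$ and $PF$, so (1) immediately gives the claimed equivalence.

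I expect no substantive obstacle. The only point that requires care is that convex-combination relations automatically push forward under any linear map but pull back only when the map is injective on the row space of $P$. Without the row-full-rank assumption, easy counterexamples (e.g.\ $P = I_2$ and $F = (1,1)^\intercal$, which yields a $PF$ with two identical rows) show the converse direction can fail, so the hypothesis is genuinely necessary for the harder half of the bijection.
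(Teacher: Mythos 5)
Your proof is correct and follows essentially the same route as the paper: convex-combination relations among rows push forward under right-multiplication by $F$, and the full-row-rank hypothesis lets you pull them back — you phrase this via the trivial left null space ($vF=0 \Rightarrow v=0$), while the paper equivalently multiplies by a right pseudo-inverse $F'$ with $FF'=I$. Your added counterexample only confirms the necessity of the hypothesis and does not change the argument.
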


\begin{proof}
For (1), note that each row of $PF$ is $p_i^TF$. If $F$ is row full rank, then $F$ has pseudo-inverse $F'$ so that $FF' = I$. Therefore, if $p_i$ is not a vertex:
\begin{equation}
    p_i = \sum_{j\neq i} a_j p_j, \quad \sum_j a_j = 1, a_j \ge 0, 
\end{equation}
then $p_i^TF$ is also not a vertex and vice versa. Therefore, $\vert(PF) = \vert(P)$. (2) follows from (1).
\end{proof}

\subsection{disentangled Representation}
We first start with two lemmas. Both of them have simple proofs. 
\begin{lemma}
\label{lemma:disentangled}
Distribution representations have the following property:
\begin{itemize}
    \item[(1)] If $F^{(i)}_\alpha$ is disentangled, $F_\alpha = \sum_i w_i F^{(i)}_\alpha$ is also disentangled. 
    \item[(2)] If $F_\alpha$ is disentangled and $h$ is any per-column element-wise function, then $h(F_\alpha)$ is disentangled. 
    \item[(3)] If $F^{(i)}_\alpha$ are disentangled, $h_i$ are per-column element-wise function, then $h_1(F^{(1)}_\alpha) \circ h_2(F^{(2)}_\alpha) \ldots \circ h_n(F^{(n)}_\alpha)$ is disentangled.
\end{itemize}
\end{lemma}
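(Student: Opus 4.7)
The plan is to unpack the definition of ``disentangled'' and then verify each of the three items by direct manipulation of Kronecker products and Hadamard products. Recall that by definition, the $j$-th column of a disentangled $F_\alpha$ has the form
\begin{equation*}
F_{\alpha,:j} \;=\; \vone \otimes \cdots \otimes \vf_{\bfactor{\alpha}{j}} \otimes \cdots \otimes \vone,
\end{equation*}
where the non-$\vone$ factor $\vf_{\bfactor{\alpha}{j}} \in \rr^2$ sits in the $j$-th slot. The key observation I would repeatedly use is that every entry of this column, indexed by a multi-index $(i_1,\dots,i_{n_\alpha})$, collapses to the single scalar $(\vf_{\bfactor{\alpha}{j}})_{i_j}$, because all other factors contribute the value $1$.

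For item \textbf{(1)}, I would use multilinearity of the tensor product in a single slot (all other slots being held fixed at $\vone$). Summing columns of $\{F^{(i)}_\alpha\}$ at the $j$-th column gives
\begin{equation*}
\sum_i w_i\, F^{(i)}_{\alpha,:j} \;=\; \vone \otimes \cdots \otimes \Bigl(\sum_i w_i\, \vf^{(i)}_{\bfactor{\alpha}{j}}\Bigr) \otimes \cdots \otimes \vone,
\end{equation*}
which is again disentangled with new factor vector $\sum_i w_i\,\vf^{(i)}_{\bfactor{\alpha}{j}}$ in position $j$. For item \textbf{(2)}, I would apply the ``per-column element-wise'' function $h_j$ to the $j$-th column and compute its entry at $(i_1,\dots,i_{n_\alpha})$: it equals $h_j((\vf_{\bfactor{\alpha}{j}})_{i_j})$, which depends only on $i_j$. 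Rewriting this as a Kronecker product with $\vone$'s in the other slots gives $\vone \otimes \cdots \otimes h_j(\vf_{\bfactor{\alpha}{j}}) \otimes \cdots \otimes \vone$, so $h(F_\alpha)$ is disentangled.

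For item \textbf{(3)}, I would first isolate the elementary fact that the Hadamard product distributes over the Kronecker product slot-by-slot, i.e.\
\begin{equation*}
\Bigl(\bigotimes_k \va_k\Bigr) \circ \Bigl(\bigotimes_k \vb_k\Bigr) \;=\; \bigotimes_k \bigl(\va_k \circ \vb_k\bigr),
\end{equation*}
verified by comparing entries. Using (2), each $h_i(F^{(i)}_\alpha)_{:j}$ has the disentangled form with some vector $\tilde\vf^{(i)}_{\bfactor{\alpha}{j}}$ in slot $j$ and $\vone$'s in the remaining slots. Taking the Hadamard product over $i$ and applying the identity above slot-by-slot yields $\vone$ in slots $k \neq j$ (since $\vone \circ \vone = \vone$) and $\bigcirc_i \tilde\vf^{(i)}_{\bfactor{\alpha}{j}}$ in slot $j$, which is again a disentangled column. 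I do not anticipate a hard step: the only mildly subtle point is making sure I interpret ``per-column element-wise'' correctly (the function may depend on the column index $j$ but acts on each entry of that column independently), so I would state this interpretation explicitly at the start.
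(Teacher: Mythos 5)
Your proof is correct and follows essentially the same route as the paper's: multilinearity of the Kronecker product in a single slot for (1), and the observation that a per-column element-wise map and the Hadamard product act slot-by-slot on columns of the form $\vone \otimes \cdots \otimes \vf_{\bfactor{\alpha}{j}} \otimes \cdots \otimes \vone$ for (2) and (3). Your entry-wise justification via the multi-index $(i_1,\dots,i_{n_\alpha})$ simply spells out the details the paper leaves implicit.
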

\begin{proof}
(1) follows from properties of tensor product. For (2) and (3), note that the $j$-th column of $F_\alpha$ is  $F_{\alpha, :j} = \vone\otimes \ldots \vf_j \ldots \otimes\vone$, therefore $h^j(F_{\alpha, :j}) = \vone\otimes \ldots h^j(\vf_j) \ldots \otimes\vone$, and $h^j_1(F^{(1)}_{\alpha, :j}) \circ h^j_2(F^{(2)}_{\alpha, :j}) = \vone\otimes \ldots h^j_1(\vf^{(1)}_j) \circ h^j_2(\vf^{(2)}_j) \ldots \otimes\vone$.
\end{proof}

Given one child $\beta\in\ch(\alpha)$, denote
\begin{eqnarray}
P_{S_j} &=& P_{\bfactor{\alpha}{j}\bfactor{\beta}{S_j}} = \left[\pr(z_{\bfactor{\beta}{S_j}} | z_{\bfactor{\alpha}{j}}) \right]\\
\vw_{S_j} &=& W_{\beta\alpha}[S_j, j] \\
\vp_{\bfactor{\alpha}{j}} &=& \left[\pr(\bfactor{\alpha}{j} = 0), \pr(\bfactor{\alpha}{j} = 1)\right]^T
\end{eqnarray}
We have $P_{S_j}\vone = \vone$ and $\vone^T \vp_{\bfactor{\alpha}{j}} = 1$. Note here for simplicity, $\vone$ represents all-one vectors of any length, determined by the context. 

Since $F_\alpha$ and $G_\beta$ are disentangled, their $j$-th column can be written as:
\begin{eqnarray}
F_{\beta, :j} &=& \vone\otimes \ldots \otimes \vf_j \otimes \ldots \otimes\vone \\
\tilde G_{\alpha, :j} &=& \vp_{\bfactor{\alpha}{1}}\otimes \ldots \otimes\tilde\vg_j \otimes \ldots \otimes \vp_{\bfactor{\alpha}{n_\alpha}}
\end{eqnarray}

For simplicity, in the following proofs, we just show the case that $n_\alpha = 2$, $n_\beta = 3$, $z_\alpha = \left[z_{\bfactor{\alpha}{1}}, z_{\bfactor{\alpha}{2}}\right]$ and $S = \{S_1, S_2\} = \{\{1, 2\}, \{3\}\}$. We write $\vf_{1,2} = [\vf_1 \otimes \vone, \vone \otimes \vf_2]$ as a 2-column matrix. The general case is similar and we omit here for brevity. 

\begin{theorem}[disentangled Forward]
If for each $\beta\in\ch(\alpha)$,  $P_{\alpha\beta}$ can be written as a tensor product $P_{\alpha\beta} = \bigotimes_i P_{\bfactor{\alpha}{i}\bfactor{\beta}{S^{\alpha\beta}_i}}$ where $\{S^{\alpha\beta}_i\}$ are $\alpha\beta$-dependent disjointed set, $W_{\beta\alpha}$ is separable with respect to $\{S^{\alpha\beta}_i\}$, $F_\beta$ is disentangled, then $F_\alpha$ is also disentangled (with/without ReLU /Batch Norm).
\end{theorem}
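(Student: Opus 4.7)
The plan is to argue column by column: fix $j \in \{1, \ldots, n_\alpha\}$ and show that the $j$-th column of $F_\alpha^{\raw} = \sum_{\beta \in \ch(\alpha)} P_{\alpha\beta} F_\beta W_{\beta\alpha}$ has the disentangled tensor-product form with its only non-trivial vector slot at position $j$. Once this is in place, Lemma 1 closes out $F_\alpha = D_\alpha \circ F_\alpha^{\raw}$ and the optional Batch Norm insertion, because both ReLU and BN act per column, and per-column element-wise operations applied to a disentangled matrix stay disentangled.

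For a single $\beta \in \ch(\alpha)$, separability of $W_{\beta\alpha}$ gives
\[
[F_\beta W_{\beta\alpha}]_{:j} = \sum_{k \in S^{\alpha\beta}_j} W_{\beta\alpha}[k, j]\, F_{\beta,:k}.
\]
Under the assumption that $\{S^{\alpha\beta}_i\}_{i=1}^{n_\alpha}$ partitions the factor indices of $z_\beta$, I would reorder tensor slots so that factors within each $S^{\alpha\beta}_i$ are grouped, and write $F_{\beta, :k} = \bigotimes_i \vu^{(k)}_i$, where $\vu^{(k)}_i = \vone$ for $i \neq j$ and $\vu^{(k)}_j$ carries $\vf_{\bfactor{\beta}{k}}$ in the slot of $k$ within $S^{\alpha\beta}_j$. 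Applying $P_{\alpha\beta} = \bigotimes_i P_{\bfactor{\alpha}{i}\bfactor{\beta}{S^{\alpha\beta}_i}}$ factor-wise, and using the identity $P\vone = \vone$ valid for any row-stochastic conditional-probability matrix, all slots $i \neq j$ collapse back to $\vone$, giving
\[
P_{\alpha\beta} F_{\beta,:k} = \vone \otimes \cdots \otimes \bigl(P_{\bfactor{\alpha}{j}\bfactor{\beta}{S^{\alpha\beta}_j}} \vu^{(k)}_j\bigr) \otimes \cdots \otimes \vone.
\]
Absorbing the scalar $W_{\beta\alpha}[k,j]$ into the $j$-th slot and summing over $k \in S^{\alpha\beta}_j$ still yields a single tensor product whose non-trivial component sits only at slot $j$, so $[P_{\alpha\beta} F_\beta W_{\beta\alpha}]_{:j}$ is itself disentangled.

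Summing over $\beta \in \ch(\alpha)$ keeps the non-trivial slot anchored at position $j$ by Lemma 1(1) applied column-wise, so $F_\alpha^{\raw}$ is disentangled. The ReLU gating $D_\alpha$ is obtained from $F_\alpha^{\raw}$ by an entry-wise sign, which is a per-column element-wise function, hence $D_\alpha$ is disentangled by Lemma 1(2), and therefore $F_\alpha = D_\alpha \circ F_\alpha^{\raw}$ is disentangled by Lemma 1(3). A BN layer inserted between the linear combination and the nonlinearity reduces, once its per-column scalar mean and standard deviation are computed, to a per-column element-wise affine map, which is again handled by Lemma 1(2). The main technical subtlety I expect to get right is the bookkeeping: the partition $\{S^{\alpha\beta}_i\}$ is allowed to depend on $\beta$, and one needs the slot into which $\vf_{\bfactor{\alpha}{j}}$ eventually lands to be the same slot $j$ for every child; this works precisely because the partition is indexed by the $n_\alpha$ parent factors, so the destination slot is determined on the parent side and is consistent across all $\beta$.
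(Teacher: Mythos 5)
Your proposal is correct and follows essentially the same route as the paper's proof: factor $P_{\alpha\beta}$ slot-wise, use row-stochasticity ($P\vone=\vone$) to collapse all slots except the one selected by the separable block of $W_{\beta\alpha}$, then invoke the disentanglement lemma to handle the sum over children, the ReLU gating, and Batch Norm as per-column element-wise operations. The only difference is presentational: the paper carries out the computation on a representative case ($n_\alpha=2$, $S=\{\{1,2\},\{3\}\}$) in matrix form, whereas you do the same bookkeeping column-by-column in full generality.
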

\begin{proof}
For a certain $\beta\in\ch(\alpha)$, we first compute the quantity $P_{\alpha\beta}F_\beta$:
\begin{equation}
P_{\alpha\beta}F_\beta = \left(P_{1,2} \otimes P_{3}\right)\left[\vf_{1,2} \otimes \vone,\ \ \vone \otimes \vf_3\right] = \left[P_{1,2}\vf_{1,2} \otimes \vone, \ \ 
\vone \otimes P_{3}\vf_3\right] \label{eq:pf}
\end{equation}

Therefore, the forward information sent from $\beta$ to $\alpha$ is:
\begin{eqnarray}
F^{\raw}_{\beta\rightarrow\alpha} &=& P_{\alpha\beta}F_\beta W_{\beta\alpha} = \left[P_{1,2}\vf_{1,2} \otimes \vone,\ \  
\vone \otimes P_{3}\vf_3\right]
\left[
\begin{array}{cc}
     \vw_{1,2} & 0 \\
     0 & \vw_{3} 
\end{array} 
\right] \\
&=& \left[P_{1,2}\vf_{1,2}\vw_{1,2} \otimes \vone, \ \ 
\vone \otimes P_{3}\vf_3\vw_3\right] 
\label{eq:forward-alpha}
\end{eqnarray}
Note that both $P_{1,2}\vf_{1,2}\vw_{1,2}$ and $P_{3}\vf_3\vw_3$ are 2-by-1 vectors. Therefore, for each $\beta\in\ch(\alpha)$, $F^{\raw}_{\beta\rightarrow\alpha}$ is disentangled. By Lemma~\ref{lemma:disentangled}, both $F^{\raw}_\alpha = \sum_{\beta\in\ch(\alpha)}F^{\raw}_{\beta\rightarrow\alpha}$ and the nonlinear response $F_\alpha$ are disentangled. By Eqn.~\ref{eq:batch-norm-projection-z-alpha}, the forward pass of Batch Norm is a per-column element-wise function, so BN also preserves disentangledness.  
\end{proof}

\begin{theorem}[Separable Weight Update]
If $P_{\alpha\beta} = \bigotimes_i P_{\bfactor{\alpha}{i}\bfactor{\beta}{S_i}}$, both $F_\beta$ and $\tilde G_\alpha$ are disentangled, $\vone^T \tilde G_\alpha = \vzero$, then the gradient update $\Delta W_{\beta\alpha}$ is separable with respect to $\{S_i\}$.
\end{theorem}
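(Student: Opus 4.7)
The plan is to compute the $(k,j)$-entry of $\Delta W_{\beta\alpha} = (P_{\alpha\beta} F_\beta)^T \tilde G_\alpha$ directly using the tensor product structures, and show that this entry vanishes whenever $k$ does not lie in $S_j$. This is precisely the separability property we need, since it forces $\Delta W_{\beta\alpha}$ to have nonzero entries only in the block-diagonal pattern $\{W_{\beta\alpha}[S_i, i]\}$.

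First, I would reuse the computation of $P_{\alpha\beta} F_\beta$ from the proof of the disentangled Forward theorem (Eqn.~\ref{eq:pf} and its generalization). Writing things in the general form: if $k \in S_i$, then the $k$-th column of $P_{\alpha\beta} F_\beta$ has the factorized shape $\vone \otimes \cdots \otimes u_k \otimes \cdots \otimes \vone$, with the nontrivial factor $u_k = P_{\bfactor{\alpha}{i}\bfactor{\beta}{S_i}} \bigl(\vone \otimes \cdots \otimes \vf_k \otimes \cdots \otimes \vone\bigr)$ sitting at position $i$, while all other tensor factors are $\vone$. Likewise, the $j$-th column of $\tilde G_\alpha$ is $\vp_{\bfactor{\alpha}{1}} \otimes \cdots \otimes \tilde\vg_j \otimes \cdots \otimes \vp_{\bfactor{\alpha}{n_\alpha}}$, with $\tilde\vg_j$ at position $j$.

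Next, I would exploit the fundamental identity $(a_1\otimes\cdots\otimes a_n)^T (b_1\otimes\cdots\otimes b_n) = \prod_\ell a_\ell^T b_\ell$ to factorize the inner product column by column. For entry $(k,j)$ with $k \in S_i$, this gives
\begin{equation}
\Delta W_{\beta\alpha}[k,j] \;=\; \Bigl(\prod_{\ell \notin \{i,j\}} \vone^T \vp_{\bfactor{\alpha}{\ell}}\Bigr) \cdot (\text{factor at position }i) \cdot (\text{factor at position }j).
\end{equation}
Each $\vone^T \vp_{\bfactor{\alpha}{\ell}} = 1$ since $\vp_{\bfactor{\alpha}{\ell}}$ is a probability vector, so the first product is $1$. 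The main case split is then whether $i = j$ or $i \neq j$. When $i = j$, the remaining product collapses to $u_k^T \tilde\vg_j$, which is exactly the $(k,j)$ block entry we want to keep. When $i \neq j$, the factor at position $j$ is $\vone^T \tilde\vg_j$ and the factor at position $i$ is $u_k^T \vp_{\bfactor{\alpha}{i}}$.

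Finally, I would bring in the centering hypothesis $\vone^T \tilde G_\alpha = \vzero$. Computing $\vone^T \tilde G_{\alpha,:j}$ with the same tensor identity yields $\vone^T \tilde\vg_j$ (since $\vone^T \vp_{\bfactor{\alpha}{\ell}} = 1$ for $\ell \neq j$), so the hypothesis is equivalent to $\vone^T \tilde\vg_j = 0$ for every $j$. Plugging this into the $i \neq j$ case shows $\Delta W_{\beta\alpha}[k,j] = 0$, giving the desired block-diagonal structure and hence separability with respect to $\{S_i\}$. The computation is essentially routine once the tensor structure of $P_{\alpha\beta} F_\beta$ is in hand; the only subtle point is that centering is exactly the condition that kills the off-block ``cross-talk'' terms, so the main thing to be careful about is making the case analysis on $i$ vs.\ $j$ cleanly and identifying which factor of the tensor product carries the zero.
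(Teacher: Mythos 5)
Your proposal is correct and follows essentially the same route as the paper's proof: both reuse the computation of $P_{\alpha\beta}F_\beta$ from the disentangled-forward argument, pair it with the disentangled columns of $\tilde G_\alpha$ via the tensor-product inner-product identity, and observe that every off-block entry carries a factor $\vone^T\tilde\vg_j$, which the centering hypothesis $\vone^T\tilde G_\alpha = \vzero$ kills. The only difference is presentational: the paper carries out the block computation on a representative small case ($n_\alpha=2$, $S=\{\{1,2\},\{3\}\}$) and notes the general case is analogous, whereas you perform the general entrywise case split on $i$ versus $j$ directly.
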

\begin{proof}
Following Eqn.~\ref{eqn:matrix-form-simplified-appendix} and Eqn.~\ref{eq:pf}, we have: 
\begin{eqnarray}
\Delta W_{\beta\alpha} &=&
\left(P_{\alpha\beta}F_\beta\right)^T \tilde G_\alpha \\
&=& \left[
\begin{array}{c}
(P_{1,2}\vf_{1,2})^T \otimes \vone^T \\
\vone^T \otimes (P_{3}\vf_3)^T 
\end{array}
\right]\left[
\tilde \vg_1 \otimes \vp_{\bfactor{\alpha}{2}},\ \  \vp_{\bfactor{\alpha}{1}} \otimes \tilde \vg_2 \right] \\
&=& \left[
\begin{array}{cc}
(P_{1,2}\vf_{1,2})^T \tilde\vg_1 & (P_{1,2}\vf_{1,2})^T\vp_{\bfactor{\alpha}{1}} \otimes \vone^T\tilde \vg_2 \\
\vone^T\tilde \vg_1 \otimes (P_3\vf_3)^T\vp_{\bfactor{\alpha}{2}} & (P_3\vf_3)^T \tilde\vg_2 \\
\end{array}
\right]
\end{eqnarray}
Since $\vone^T \tilde G_\alpha = \vzero$,  we have for any $j$, $\vone^T \tilde G_{\alpha,:j} = 0$ and thus $\vone^T\tilde \vg_j = 0$. Therefore, 
\begin{equation}
\Delta W_{\beta\alpha} = \mathrm{diag}\left((P_{1,2}\vf_{1,2})^T \tilde\vg_1,\ \ (P_3\vf_3)^T \tilde\vg_2\right)
\end{equation}
which is separable with respect to $S$. In particular: 
\begin{equation}
    \Delta\vw_{1,2} = (P_{1,2}\vf_{1,2})^T \tilde\vg_1, \quad \Delta\vw_3 = (P_3\vf_3)^T \tilde\vg_2 
\end{equation}
\end{proof}

\subsubsection{Discussion about backpropagation of disentangled gradient} 
One problem remains. If $\{\tilde G_\alpha\}_{\alpha\in\pa(\beta)}$ are all disentangled, whether $\tilde G_\beta$ is disentangled? We can try computing the following quality:
\begin{eqnarray}
    \tilde G^\raw_{\alpha\rightarrow \beta} &=& P_{\alpha\beta}^T\tilde G_\alpha W_{\beta\alpha}^T \\
    &=& \left(P^T_{1,2} \otimes P^T_{3}\right)\left[\tilde \vg_1 \otimes \vp_{\bfactor{\alpha}{2}},\ \  \vp_{\bfactor{\alpha}{1}} \otimes \tilde \vg_2 \right]\left[
\begin{array}{cc}
     \vw^T_{1,2} & 0 \\
     0 & \vw^T_{3} 
\end{array}
\right] \\
    &=& \left[P_{1,2}^T\tilde\vg_1\otimes \vp_{\bfactor{\beta}{3}},\ \  \vp_{\bfactor{\beta}{1,2}} \otimes P_3^T\tilde\vg_2 \right]
    \left[
    \begin{array}{cc}
     \vw^T_{1,2} & 0 \\
     0 & \vw^T_{3} 
    \end{array}
    \right] \\
    &=& \left[
    P_{1,2}^T\tilde\vg_1 \vw^T_{1,2}\otimes \vp_{\bfactor{\beta}{3}}, \ \  
    \vp_{\bfactor{\beta}{1,2}}\otimes P_3^T\tilde\vg_2 \vw^T_3 
    \right] 
\end{eqnarray}
Note that here we use the following equality from total probability rule:
\begin{equation}
    P_3^T\vp_{\bfactor{\alpha}{2}} = \vp_{\bfactor{\beta}{3}}, \quad P_{1,2}^T\vp_{\bfactor{\alpha}{1}} = \vp_{\bfactor{\beta}{1,2}} \label{eq:gradient-beta}
\end{equation}
where $\vp_{\bfactor{\beta}{1,2}}$ is a 4-by-1 vector:
\begin{equation}
    \vp_{\bfactor{\beta}{1,2}} = \left[
    \begin{array}{c}
    \pr(z_{\bfactor{\beta}{1}} = 0, z_{\bfactor{\beta}{2}} = 0) \\
    \pr(z_{\bfactor{\beta}{1}} = 0, z_{\bfactor{\beta}{2}} = 1) \\ 
    \pr(z_{\bfactor{\beta}{1}} = 1, z_{\bfactor{\beta}{2}} = 0) \\
    \pr(z_{\bfactor{\beta}{1}} = 1, z_{\bfactor{\beta}{2}} = 1)
    \end{array}
    \right]
\end{equation}
Note that the ordering of these joint probability corresponds to the column order of $P_{1,2}$. 

Now with this example, we see that the backward case ( Eqn.~\ref{eq:gradient-beta}) is very different from the forward case (Eqn.~\ref{eq:forward-alpha}), in which  $\tilde G^\raw_{\alpha\rightarrow\beta}$ is no longer disentangled. Indeed, $P_{1,2}^T\tilde\vg_1 \vw^T_{1,2}$ is a 2-column matrix and $\vp_{\bfactor{\beta}{1,2}}$ is not a rank-1 tensor anymore. Intuitively this makes sense, if two low-level attributes have very similar behaviors, there is no way to distinguish the two via backpropagation.  

Note that we also \emph{cannot} assume independence: $\vp_{\bfactor{\beta}{1,2}} = \vp_{\bfactor{\beta}{1}} \otimes \vp_{\bfactor{\beta}{2}}$ since the independence property is in general not carried from layer to layer.

For general cases, $\tilde G^\raw_{\alpha\rightarrow\beta}$ takes the following form:
\begin{equation}
\tilde G^\raw_{\alpha\rightarrow\beta} = \left[P_{S^{\alpha\beta}_1}^T\tilde\vg^\alpha_1 \vw_{S^{\alpha\beta}_1}^T \otimes \bigotimes_{j=2}^{n_\alpha} \vp_{\bfactor{\beta}{S^{\alpha\beta}_j}},\quad \vp_{\bfactor{\beta}{S^{\alpha\beta}_1}}\otimes P_{S^{\alpha\beta}_2}^T\tilde\vg^\alpha_2 \vw_{S^{\alpha\beta}_2}^T \otimes \bigotimes_{j=3}^{n_\alpha} \vp_{\bfactor{\beta}{S^{\alpha\beta}_j}}, \quad \ldots \right]
\end{equation}

One hope here is that if we consider $\sum_{\alpha\in \pa(\beta)} \tilde G^\raw_{\alpha\rightarrow\beta}$, the summation over parent $\alpha$ could lead to a better structure, even for individual $\alpha$, $P_{S_1}^T\tilde\vg^\alpha_1 \vw_{S_1}^T$ is not 1-order tensor. For example, if $S^{\alpha\beta}_j = S_j$, then for the first column in $S_1$, due to $\vone^T\tilde\vg_j = 0$, we know that:
\begin{equation}
    \sum_{\alpha\in\pa(\beta)} P_{\alpha,S_1}^T\tilde\vg^\alpha_1\vw_{\alpha, S_1[1]}^T = \sum_{\alpha\in\pa(\beta)} c_\alpha
    (\vv^+_{\alpha,S_1} - \vv^-_{\alpha,S_1}) \label{eq:part}
\end{equation}
where $\vv^+_{\alpha,S_1} =     \pr(z_{\bfactor{\beta}{S_1}}|z_{\bfactor{\alpha}{1}} = 1)$ and $\vv^-_{\alpha,S_1} = \pr(z_{\bfactor{\beta}{S_1}}|z_{\bfactor{\alpha}{1}} = 0)$ and $P_{\alpha,S_i} = \left[ \begin{array}{c} 
\vv^-_{\alpha,S_1} \\
\vv^+_{\alpha,S_1}
\end{array}\right]$. If each $\alpha\in\pa(\beta)$ is informative in a diverse way, and $|S_1|$ is relatively small (e.g., 4), then $\vv^+_{\alpha,S_1} -
\vv^-_{\alpha,S_1} \neq \vzero$ and spans the probability space of dimension $2^{|S_1|} - 1$. Then we can always find $c_\alpha$ (or equivalently, weights) so that Eqn.~\ref{eq:part} becomes rank-1 tensor (or disentangled). Besides, the gating $D_\beta$, which is disentangled as it is an element-wise function of $F_\beta$, will also play a role in regularizing $\tilde G_\beta$.

We will leave this part to future work. 

\end{document}